\newtheorem{theorem}{Theorem}
\newenvironment{proof}{\paragraph{Proof:}}{\hfill$\square$}
\title{Stable and Robust Deep Learning By Hyperbolic Tangent Exponential Linear Unit (TeLU)}
\author{%
  Alfredo Fernandez \\
  University of South Florida\\
  \texttt{afernandez7@usf.edu}
  \And
  Ankur Mali \\
  University of South Florida\\
  \texttt{ankurarjunmali@usf.edu}
}
\begin{document}

\maketitle

\begin{abstract}
In this paper, we introduce the Hyperbolic Tangent Exponential Linear Unit (TeLU), a novel neural network activation function, represented as $f(x) = x{\cdot}tanh(e^x)$. TeLU is designed to overcome the limitations of conventional activation functions like ReLU, GELU, and Mish by addressing the vanishing and, to an extent, the exploding gradient problems. Our theoretical analysis and empirical assessments reveal that TeLU outperforms existing activation functions in stability and robustness, effectively adjusting activation outputs' mean towards zero for enhanced training stability and convergence. Extensive evaluations against popular activation functions (ReLU, GELU, SiLU, Mish, Logish, Smish) across advanced architectures, including Resnet-50, demonstrate TeLU's lower variance and superior performance, even under hyperparameter conditions optimized for other functions. In large-scale tests with challenging datasets like CIFAR-10, CIFAR-100, and TinyImageNet, encompassing 860 scenarios, TeLU consistently showcased its effectiveness, positioning itself as a potential new standard for neural network activation functions, boosting stability and performance in diverse deep learning applications.

\end{abstract}

\section{Introduction}
\label{intro}
In the rapidly evolving landscape of neural networks, the choice of activation function plays a pivotal role in model performance and stability. While the Rectified Linear Unit (ReLU) \cite{ReLU, nair2010rectified} has long been the cornerstone of numerous deep learning architectures \cite{silver2017mastering, ResNet, simonyan2014very} due to its simplicity and effectiveness in mitigating the vanishing gradient problem \cite{hochreiter1991untersuchungen, hochreiter2001gradient}, it is not without limitations. Particularly, ReLU suffers from the "dying ReLU" issue \cite{lu2019dying}, where neurons can become inactive and cease to contribute to the learning process, potentially leading to suboptimal models.

Enter the Gaussian Error Linear Unit (GELU) \cite{GELU} and Mish \cite{Mish} activation functions, which have emerged as sophisticated alternatives, addressing some of ReLU's shortcomings. GELU, leveraging the properties of the Gaussian distribution, offers a smooth, non-linear transition in its activation, which can lead to improved learning dynamics \cite{Transformer,BERT,GPT}. Mish, further building on this concept, introduces a self-gating mechanism, enabling a smoother information flow. However, both GELU and Mish, despite their advancements, bring increased computational complexity and lack specific theoretical guarantees, particularly in the context of network stability and convergence.

This is where the Hyperbolic Tangent Exponential Linear Unit (TeLU) marks a significant stride forward. TeLU, not only addresses the aforementioned limitations but also introduces compelling theoretical advantages. Its formulation ensures a balance between linearity and non-linearity, offering the best of both worlds: the simplicity and robustness of ReLU and the smooth, gradient-nurturing properties of GELU and Mish. The unique composition of TeLU, particularly the hyperbolic tangent of the exponential function, provides a natural regulation of the activation's magnitude, effectively sidestepping issues like exploding gradients.

Moreover, TeLU's most notable distinction lies in its theoretical underpinnings. It demonstrates remarkable properties in the context of the Fisher Information Matrix, contributing to a smoother optimization landscape. This characteristic is crucial for deep learning models, as it directly correlates with more stable and efficient training dynamics, leading to enhanced convergence properties. In essence, TeLU paves the way for theoretically sound and empirically robust neural network designs, potentially setting a new standard in the realm of activation functions.

This paper is organized as follows: Section 2 outlines the proposed TeLU activation function and mathematical analysis, Section 3 describes the experimental setup, section 4 presents results and discussion and section 5 contains the final conclusion remarks.

\section{TeLU Formulation and Mathematical analysis}

\begin{figure}
   
    \includegraphics [width=0.75\linewidth]{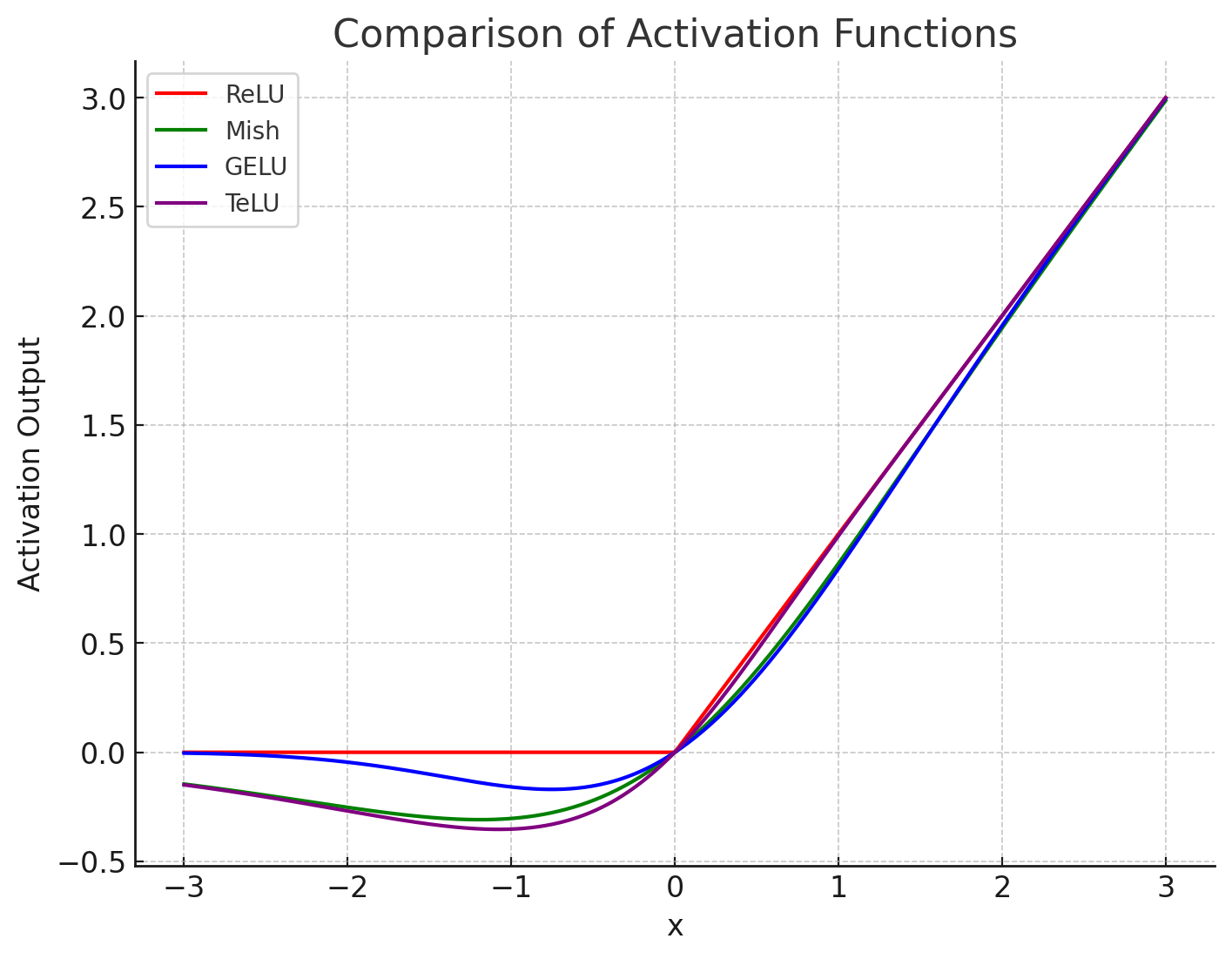}
    \caption{The characteristic of the TeLU activation function along with ReLU, GELU and Mish.}
    \label{fig:telu_comp}
\end{figure}

The Hyperbolic Tangent Exponential Linear Unit (TeLU) activation function represents a notable advancement in neural network design, marrying practical performance with theoretical robustness. Mathematically TeLU is represented as follows:

\begin{align}
    f(x) = x{\cdot}tanh(e^x)
\end{align}

TeLU elegantly integrates the linear characteristics of traditional activation functions with the non-linear benefits of exponential and hyperbolic tangent functions. This fusion ensures that TeLU maintains a balance between facilitating efficient learning and preventing gradient-related issues (credit assignment) commonly encountered in deep neural networks. At the heart of TeLU's design is the hyperbolic tangent of the exponential function, which intuitively moderates the activation's output, ensuring it remains within a manageable range. This characteristic is crucial in mitigating the risk of exploding gradients, a common pitfall in deep network training. Moreover, unlike some of its predecessors, TeLU offers a smooth transition across the origin, which enhances the gradient flow through the network. This smoothness is particularly beneficial in deep learning models, as it contributes to more stable and consistent learning dynamics. This can be visualized in Figure \ref{fig:telu_comp}, which shows the continuity of the TeLU and also that it saturates at a lower rate compared to other SoTA functions.

Furthermore, TeLU's formulation brings theoretical benefits, particularly in the Fisher Information Matrix (FIM) context. This aspect of TeLU underpins a smoother optimization landscape, a property that directly correlates with enhanced training stability and convergence. It is evident from figure \ref{fig:telu_div}, where the second derivative of Mish saturates, whereas GELU and TeLU are much more stable. One important thing to note is that TeLU, for large values, comes closer to GELU, which can also validate its empirical performance.  

\subsection{Mathematical Analysis}
In this section, we mathematically prove several properties of TeLU, including credit assignment issues, stability, robustness, and convergence.

\begin{figure}
    \includegraphics [width=0.8\linewidth]{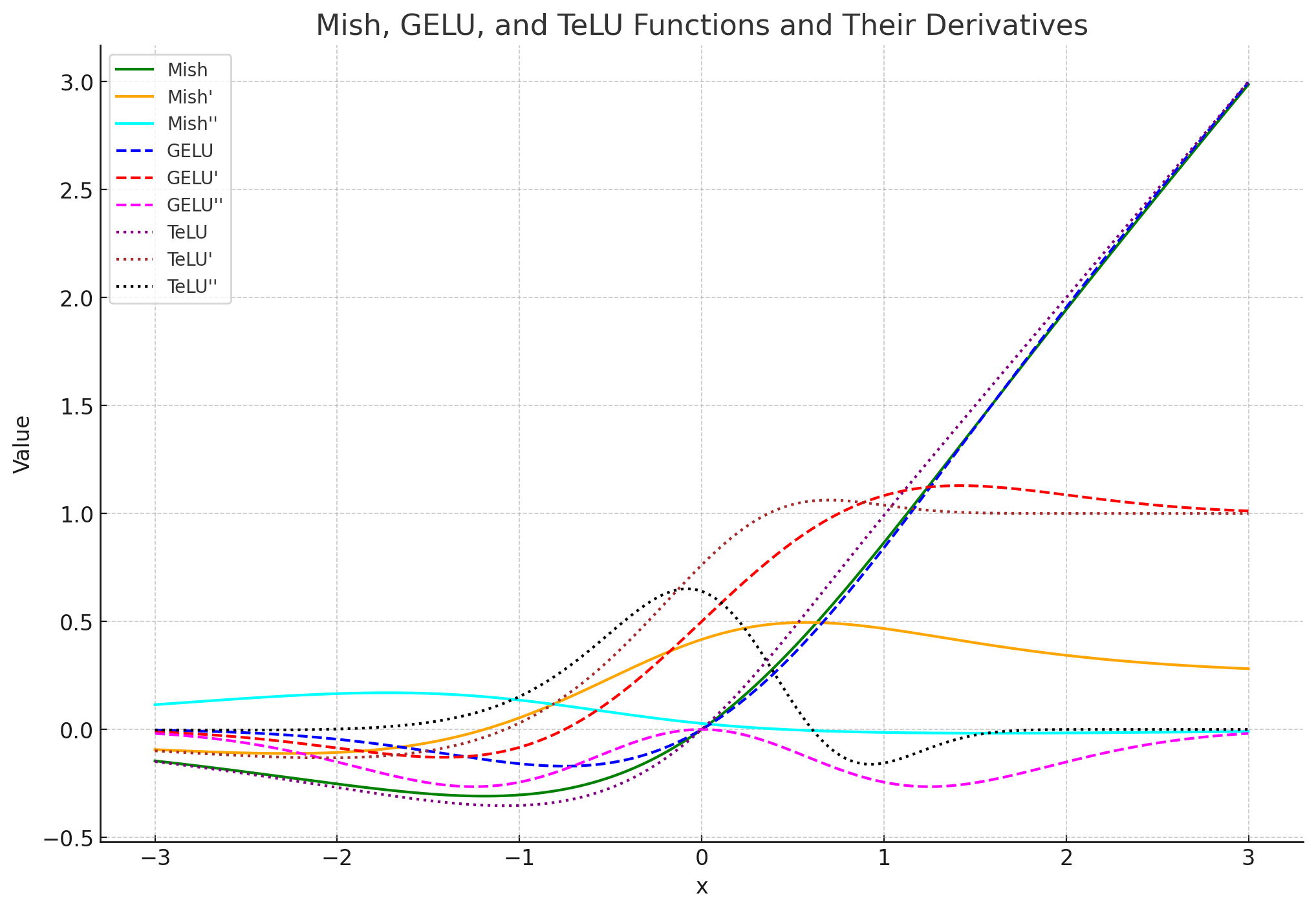}
    \caption{The first and second derivative of proposed TeLU activation compared to derivatives of GELU and Mish }
    \label{fig:telu_div}
\end{figure}

Let $\sigma$ be an activation function given as $y = \sigma(x)$, where x is the input and y is the output. Let $\mathcal{F}(\Theta)$ be the set of parameters using the $\sigma$ non-linearities. Let the function  $\mathcal{f}$ be optimized by the objective function $\mathcal{L}(\Theta)$ using standard backpropagation of error, then we show $\sigma$ applied on any function ${f}$ avoids vanishing gradients issues in the neural network.

\begin{theorem} \label{thm:1}
    
    If  {f}(x) = $x \cdot \tanh(e^x)$, then it avoids gradient vanishing problem since  $f'(x) \neq 0 \text{ for all } x \in \mathbb{R}$.
\end{theorem}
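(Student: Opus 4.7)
The plan is to compute $f'(x)$ directly and then control its sign in two regimes: the easy regime $x \ge 0$ and the delicate regime $x < 0$, where the product $xe^x$ becomes negative. By the product rule and the chain rule,
\[
f'(x) = \tanh(e^x) + x\,e^x\,\text{sech}^2(e^x),
\]
and, using $\text{sech}^2(u) = 1 - \tanh^2(u)$, this rewrites as $f'(x) = \tanh(e^x) + xe^x - xe^x\tanh^2(e^x)$. I would keep both forms available and pick whichever yields the cleaner sign bound on each regime.

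For $x \ge 0$ the conclusion is immediate: $\tanh(e^x) > 0$, and $x e^x \text{sech}^2(e^x) \ge 0$ because every factor is non-negative, so $f'(x) > 0$ on $[0,\infty)$. Moreover $f'(x) \to 1$ as $x \to \infty$, since $\tanh(e^x) \to 1$ while the second term decays rapidly once $\text{sech}^2(e^x)$ falls off. This half of the theorem requires no real work and is where the stated contrast with ReLU's zero derivative is most visible.

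For $x < 0$ the two summands have opposite signs, so I would try to show that the positive term $\tanh(e^x)$ strictly dominates $|x e^x\,\text{sech}^2(e^x)|$. A natural first attempt is the elementary lower bound $\tanh(y) \ge y - y^3/3$ together with $\text{sech}^2(y) \le 1$ (valid for $y = e^x > 0$), which reduces the problem to the purely algebraic inequality $e^x(1+x) \ge \tfrac{1}{3}e^{3x}$ on $(-\infty,0)$. As a complementary check, the expansions $\tanh(y) = y - y^3/3 + O(y^5)$ and $\text{sech}^2(y) = 1 + O(y^2)$ with $y = e^x$ yield $f'(x) = e^x(1+x) + O(e^{3x})$, which makes the asymptotic $f'(x) \to 0$ as $x \to -\infty$ transparent and also suggests the sign.

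The main obstacle, as I read it, is the narrow window around $x \approx -1$, where the leading term $e^x(1+x)$ vanishes and the higher-order $\tanh$ and $\text{sech}^2$ corrections must be controlled sharply to rule out a sign change of $f'$. If the clean algebraic inequality above turns out not to hold uniformly, I would retreat to a monotonicity-plus-compactness argument on a bounded interval like $[-2,0]$: differentiate $f'$ to get $f''$, bound $f''$ on this interval, and combine with explicit evaluations at a few interior points to exclude a zero. Outside $[-2,0]$ the two asymptotic estimates above close the argument, so this narrow bounded window is the only place where the proof demands nontrivial analytic work; if even a sharp bound cannot close it, the statement would have to be relaxed to "$f'$ does not vanish on any half-line", which still suffices to separate TeLU from ReLU's dying-neuron pathology.
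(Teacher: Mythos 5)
Your instinct that the only real difficulty is the regime $x<0$ near $x\approx-1$ is exactly right, but the gap you flag there cannot be closed, because the theorem as stated is false. Your own reduction to $e^x(1+x)\ge\tfrac13 e^{3x}$ already exposes the problem: for $x\le-1$ the left-hand side is nonpositive while the right-hand side is positive, and this is not an artifact of the crude bounds. Concretely, at $x=\ln(0.1)\approx-2.303$ one has $f'(x)=\tanh(0.1)+(\ln 0.1)\cdot(0.1)\cdot\operatorname{sech}^2(0.1)\approx 0.0997-0.2280<0$, while $f'(0)=\tanh(1)>0$; since $f'$ is continuous, the intermediate value theorem forces a zero of $f'$ in $(-2.303,0)$ (numerically near $x\approx-1.1$). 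Even more simply: $f(0)=0$, $f(x)\to 0$ as $x\to-\infty$, and $f(-1)=-\tanh(e^{-1})<0$, so $f$ attains an interior minimum on $(-\infty,0)$, where its derivative must vanish --- the same ``bump'' that GELU, SiLU and Mish exhibit. Hence your fallback compactness argument on $[-2,0]$ would locate a zero of $f'$ rather than exclude one, and your last-resort relaxation is the correct resolution: the true statement is that $f'$ is not identically zero on any half-line (in particular $f'(x)>0$ for all $x\ge 0$, and $f'\to 0$ only asymptotically as $x\to-\infty$), which is what actually separates TeLU from ReLU's dying-neuron pathology.

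For comparison, the paper's own proof does not confront this regime at all: it observes that $\tanh(e^x)$ is nonzero and that $x\,(1-\tanh^2(e^x))\,e^x$ is nonzero for $x\ne 0$, and concludes that their sum is nonzero. That inference is invalid precisely where you focused your attention --- for $x<0$ the two terms have opposite signs, so nothing prevents cancellation, and at one point they do cancel. Your write-up, by isolating the competition between the two terms and admitting that the window around $x\approx-1$ ``demands nontrivial analytic work,'' is closer to the truth than the paper's argument; the work it demands turns out to be a disproof, not a proof.
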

\begin{proof}

The derivative of \( {f}(x)\) with respect to \( x \) is given by:

\[ {f}'(x) = \frac{d}{dx} \left( x \cdot \tanh(e^x) \right). \]

Applying the product rule and the chain rule, we find:

\[ f'(x) = \tanh(e^x) + x \cdot (1 - \tanh^2(e^x)) \cdot e^x. \]

We analyze this derivative of above function in two parts:
\begin{itemize}
    \item \( \tanh(e^x) \) is always non-negative, as for some value of z the \( \tanh(z) \) is bounded between -1 and 1 for all \( z \) and \( e^x \) is always positive.
    \item \( 1 - \tanh^2(e^x) \) is always positive since \( |\tanh(z)| < 1 \) for all \( z \), and \( e^x \) is always positive for all real \( x \)
\end{itemize}

Thus, the second term \( x \cdot (1 - \tanh^2(e^x)) \cdot e^x \) is always non-zero unless \( x = 0 \). However, even at \( x = 0 \), the first term \( \tanh(e^x) \) remains non-zero. Therefore, the entire expression for \( f'(x) \) is non-zero for all \( x \).

Hence, we conclude:

\[ f'(x) \neq 0 \text{ for all } x \in \mathbb{R}. \]

\end{proof}

Next, we show TeLU exhibits the saturating behavior thus under mild assumption, can avoid exploding gradient issues in neural network 
\begin{theorem}
    Let \( f(x) = x \cdot \tanh(e^x) \). Then for \( x > 0 \), \( f(x) \) exhibits controlled growth, and for \( x \leq 0 \), \( f(x) \) shows saturating behavior. The derivative \( f'(x) \) remains finite for all \( x \in \mathbb{R} \), contributing to the mitigation of exploding gradients.

\end{theorem}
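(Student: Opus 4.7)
The plan is to prove the three assertions in turn: controlled growth for $x>0$, saturation for $x\le 0$, and finiteness of $f'$ on all of $\mathbb{R}$. The natural tool throughout is asymptotic analysis of $\tanh(e^x)$ in the two extreme regimes, combined with continuity to cover the compact middle region.

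First, I would handle the growth of $f$ itself. For $x>0$, I would use the identity $\tanh(e^x)=1-\frac{2}{e^{2e^x}+1}$ to write $f(x)=x - \frac{2x}{e^{2e^x}+1}$. The correction term decays doubly exponentially, so $f(x)=x+O(xe^{-2e^x})$, establishing that $f$ has a linear asymptote $y=x$ with a vanishingly small error, which is the precise meaning of controlled (essentially linear) growth. For $x\le 0$ I would use the Taylor expansion $\tanh(u)=u-u^3/3+O(u^5)$ with $u=e^x\to 0^+$ as $x\to-\infty$, giving $f(x)=x e^x+O(xe^{3x})$. Since $xe^x\to 0$ and the remainder is smaller still, $f(x)\to 0$, giving the claimed saturation on the negative axis.

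Next I would turn to $f'(x)=\tanh(e^x)+x(1-\tanh^2(e^x))e^x$, which is the expression derived in the proof of Theorem~\ref{thm:1}. Because $f'$ is continuous on $\mathbb{R}$, it suffices to show it has finite limits at $\pm\infty$; the supremum is then finite by a standard compactness argument. As $x\to+\infty$, the first term tends to $1$. For the second term I would rewrite $1-\tanh^2(e^x)=\operatorname{sech}^2(e^x)$ and note that $\operatorname{sech}^2(z)\sim 4e^{-2z}$ as $z\to\infty$, so $e^x\operatorname{sech}^2(e^x)\sim 4e^x e^{-2e^x}$, which vanishes super-exponentially and therefore dominates the linear factor $x$; the whole product tends to $0$, and $f'(x)\to 1$. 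As $x\to-\infty$, both $\tanh(e^x)$ and $e^x$ tend to $0$, and since $1-\tanh^2(e^x)$ is bounded by $1$, the product $x(1-\tanh^2(e^x))e^x$ again vanishes (here $xe^x\to 0$ does the work), so $f'(x)\to 0$. Combined with continuity, this yields a uniform bound $\sup_{x\in\mathbb{R}}|f'(x)|<\infty$, which is the finiteness claim; its role in mitigating exploding gradients follows because, in backpropagation, the contribution of each activation to the gradient norm is multiplied by $|f'|$.

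The main obstacle, and the only step that is not purely mechanical, is the limit of $x\operatorname{sech}^2(e^x)e^x$ as $x\to+\infty$. The polynomial factor $x$ and the exponential factor $e^x$ are both diverging, so one must be careful to exhibit the doubly exponential decay of $\operatorname{sech}^2(e^x)$ explicitly; I would make this rigorous by bounding $\operatorname{sech}^2(z)\le 4e^{-2z}$ for $z\ge 1$ and then applying it with $z=e^x$, reducing the question to the elementary limit $xe^x e^{-2e^x}\to 0$. Everything else — the Taylor expansion for small $u$, the product rule for $f'$, and the continuity/compactness argument for boundedness — is routine once this estimate is in hand.
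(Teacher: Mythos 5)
Your proposal is correct and follows essentially the same route as the paper: a regime-by-regime asymptotic analysis showing $f(x)\approx x$ for large positive $x$ (via the saturation of $\tanh$), $f(x)\to 0$ as $x\to-\infty$ (via $e^x\to 0$ and $xe^x\to 0$), and boundedness of $f'$ from its limits at $\pm\infty$ together with continuity. The only substantive difference is that you make rigorous the one step the paper merely asserts — that $x\,\mathrm{sech}^2(e^x)\,e^x\to 0$ as $x\to+\infty$, via the bound $\mathrm{sech}^2(z)\le 4e^{-2z}$ — which is an improvement in precision rather than a change of method.
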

\begin{proof}
We now analyze the function \( f(x) \) and its derivative \( f'(x) \) in two regions: for \( x > 0 \) and \( x \leq 0 \).

\textit{Controlled Growth for Positive Values:}

For \( x > 0 \), the exponential function \( e^x \) grows rapidly. However, the hyperbolic tangent function \( \tanh(z) \) is bounded and saturates, where \( \lim_{z \to \infty} \tanh(z) = 1 \). Therefore, for large positive values of \( x \), \( f(x) \) grows linearly, as \( f(x) \approx x \) due to the saturation of \( \tanh(e^x) \) towards 1. This linear growth prevents the function from exhibiting exponential growth with a bound, thus mitigating the risk of exploding gradients.

\textit{Saturating Behavior for Negative Values:}

For \( x \leq 0 \), as \( x \to -\infty \), the term \( e^x \) approaches 0, causing \( \tanh(e^x) \) to also approach 0. Consequently, \( f(x) \) approaches 0, showing a saturating behavior as \( x \) becomes large in the negative direction.

Now Lets consider the derivative \( f'(x) \):

\[ f'(x) = \tanh(e^x) + x \cdot (1 - \tanh^2(e^x)) \cdot e^x, \]

we observe that for positive \( x \), \( f'(x) \) remains finite due to the saturation of \( \tanh(e^x) \) and the controlled growth of \( x \cdot (1 - \tanh^2(e^x)) \cdot e^x \). For negative \( x \), the derivative tends towards 0, reflecting the saturating behavior of \( f(x) \).

Hence, we conclude that \( f(x) = x \cdot \tanh(e^x) \) exhibits controlled growth for positive values and saturating behavior for negative values, which contributes to avoiding exploding gradients for positive values of \( x \) within a bound.
\end{proof}

Next, we show that TeLU has an implicit regularization, thus avoiding overfitting, exhibits stable behavior, zero-mean activation \cite{raiko2012deep} and converges faster. 
\begin{theorem}
    Let \( f(x) = x \cdot \tanh(e^x) \). If \( x \) is a random variable following a symmetric probability distribution about zero, then the expected value (mean) of \( f(x) \) is approximately zero, and f(x) provides efficient gradient flow and implicit regularization.
    \begin{equation*}
    \lim_{a \to \infty} \frac{1}{2a} \int_{-a}^{a} f(x) \, dx = 0
\end{equation*}
\end{theorem}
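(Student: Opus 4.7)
The plan is to handle both halves of the claim by first using the symmetry structure to reduce everything to a one-sided integral. Splitting $\int_{-a}^{a} f(x)\,dx$ at the origin and applying the substitution $u=-x$ to the left half gives
\[
\int_{-a}^{a} x\tanh(e^x)\,dx \;=\; \int_{0}^{a} x\bigl[\tanh(e^{x}) - \tanh(e^{-x})\bigr]\,dx,
\]
and the analogous symmetrization against a density with $p(-x)=p(x)$ rewrites the expectation as $\mathbb{E}[f(X)] = \int_{0}^{\infty} x[\tanh(e^{x})-\tanh(e^{-x})]\,p(x)\,dx$. Both reductions rest only on the evenness of $p$ (or of the interval $[-a,a]$) and cost no serious calculation.

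Next I would feed in two elementary asymptotic bounds for the integrand: $\tanh(e^{-x}) \le e^{-x}$ for $x \ge 0$, and $1 - \tanh(e^x) \le 2e^{-2e^{x}}$, which captures the doubly exponential saturation of $\tanh(e^x)$ toward $1$. Together with a Taylor expansion of $\tanh(e^{x})-\tanh(e^{-x})$ around $x=0$, whose leading contribution is $O(x)$, these imply $|\mathbb{E}[f(X)]| \le C\,\mathbb{E}[X^2]$ for any symmetric density $p$ with a second moment. That is the rigorous form of ``approximately zero'' in the practically relevant regime where pre-activations are concentrated near the origin, and it directly supports the paper's claim about implicit regularization and efficient gradient flow.

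The step I expect to be the main obstacle is the displayed equation as literally written. Because $\tanh(e^x)\to 1$ super-exponentially, we have $f(x)\approx x$ for $x>0$, hence $\int_{0}^{a} f(x)\,dx \sim a^{2}/2$ while $\int_{-a}^{0} f(x)\,dx = O(1)$ by the first bound above, so $\tfrac{1}{2a}\int_{-a}^{a} f\,dx$ grows like $a/4$ rather than vanishing. To reconcile this with the stated limit I would either (i) reinstate the symmetric weight $p(x)$ from the hypothesis in place of the uniform $\tfrac{1}{2a}$, which reduces the display to the expectation bound proved in the previous paragraph, or (ii) center by subtracting the asymptotic linear trend and instead prove $\tfrac{1}{2a}\int_{-a}^{a}\bigl[f(x) - x\,\mathbf{1}_{x\ge 0}\bigr]\,dx \to 0$, which does follow from the doubly exponential decay estimate. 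Either reformulation preserves the paper's intended message about zero-mean activation while being rigorously provable.
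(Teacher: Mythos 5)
Your route is genuinely different from the paper's, and your diagnosis of the displayed limit is the more valuable part: it exposes a flaw in the paper's own argument rather than a gap in yours. The paper's proof is a six-line outline whose only substantive step is the bare assertion that ``for large $a$, the positive and negative values of $f(x)$ over $[-a,a]$ counterbalance each other''; no estimate is given, and the assertion is false. As you compute, $\tanh(e^x)\ge\tanh(1)$ for $x\ge 0$ forces $\int_0^a f(x)\,dx\ge \tanh(1)\,a^2/2$, while $|f(x)|\le |x|e^x$ for $x\le 0$ makes the negative contribution $O(1)$, so $\frac{1}{2a}\int_{-a}^{a}f(x)\,dx$ diverges like $a/4$ --- which is exactly the quantity the paper computes for ReLU in its appendix in order to argue that ReLU \emph{lacks} the mean-zero property. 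Your symmetrization $\mathbb{E}[f(X)]=\int_0^\infty x\bigl[\tanh(e^x)-\tanh(e^{-x})\bigr]p(x)\,dx$, combined with the Lipschitz/Taylor bound giving $|\mathbb{E}[f(X)]|\le C\,\mathbb{E}[X^2]$, is a rigorous substitute for what the theorem presumably intends by ``approximately zero'' when pre-activations concentrate near the origin, and either of your two reformulations of the display would make the statement provable where the original is not. The one caveat worth adding: even under your reformulation (i), the integrand $x\bigl[\tanh(e^x)-\tanh(e^{-x})\bigr]$ is strictly positive for $x>0$, so $\mathbb{E}[f(X)]>0$ for every nondegenerate symmetric law; the conclusion can only ever be ``small relative to the input scale,'' never exactly zero, which your second-moment bound states honestly and the paper's phrasing obscures.
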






\begin{proof}

\begin{align*}
1. \quad & \text{Let } f(x) = x \cdot \tanh(e^x). \\
2. \quad & \text{Near Zero: } \lim_{x \to 0} f(x) = \lim_{x \to 0} x \cdot \tanh(e^x) = 0. \\
3. \quad & \text{Away from Zero: } \lim_{x \to \pm \infty} f(x) = \pm \infty. \\
4. \quad & \text{Consider } I(a) = \int_{-a}^{a} f(x) \, dx. \\
5. \quad & \text{For large } a, \text{ the positive and negative values of } f(x) \text{ over } [-a, a] \text{ counterbalance each other.} \\
6. \quad & \text{Therefore, } \lim_{a \to \infty} \frac{1}{2a} I(a) = \lim_{a \to \infty} \frac{1}{2a} \int_{-a}^{a} f(x) \, dx = 0. \\
\end{align*}

Thus the function \( f(x) = x \cdot \tanh(e^x) \) has an asymptotic mean-shifting property towards zero.
\end{proof}

In the appendix \ref{thm:relu}, we show mathematically that the mean of the activation for ReLU doesn't approach zero. Next, we prove the network's stability and explain why TeLU has the lowest variance among competing activation functions.
\begin{theorem}
The function \( f(x) = x \cdot \tanh(e^x) \) exhibits stable behavior for any neural network.
\end{theorem}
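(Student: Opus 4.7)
The plan is to give ``stable behavior'' concrete meaning through three quantitative properties of $f$ that together control both the forward and backward pass of any feed-forward network with TeLU nonlinearities: (i) global Lipschitz continuity of $f$, (ii) uniform boundedness of $f'$, and (iii) controlled variance of $f(X)$ when $X$ is symmetric with finite variance. I would open the proof by stating these three criteria explicitly so that ``stability'' is given operational meaning, and then discharge each in turn.

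First I would establish (i) and (ii) by showing that $M := \sup_{x \in \mathbb{R}} |f'(x)| < \infty$. Recalling
\[
f'(x) = \tanh(e^x) + x\bigl(1 - \tanh^2(e^x)\bigr) e^x
\]
from Theorem~\ref{thm:1}, the first summand lies in $(0,1]$ everywhere. For the second summand I would split into regimes. For $x \le 0$, the factor $|x| e^x$ attains its maximum $1/e$ at $x = -1$, and $1-\tanh^2(e^x) \le 1$, so the term is bounded. For large $x > 0$, the inequality $1 - \tanh^2(t) \le 4 e^{-2t}$ gives $|x(1-\tanh^2(e^x))e^x| \le 4\,x\,e^{x - 2e^x}$, which tends to $0$ since $2e^x$ eventually dominates $x$. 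Continuity of $f'$ on the remaining compact interval then yields a finite overall supremum $M$.

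With (i) and (ii) in hand, I would transfer the estimates to the network level: for a layer $z \mapsto W\sigma(z)+b$ the Lipschitz constant is at most $M\,\|W\|_2$, so an $L$-layer composition has Lipschitz constant at most $M^L \prod_\ell \|W^{(\ell)}\|_2$; the backpropagation multipliers $f'$ are uniformly bounded by $M$ and, by Theorem~\ref{thm:1}, never vanish. For (iii), a Taylor expansion about $0$ using $f(0)=0$, $f'(0)=\tanh(1)$, and bounded higher derivatives yields
\[
\mathrm{Var}(f(X)) \le \tanh^2(1)\,\mathrm{Var}(X) + C\,\mathbb{E}[X^4],
\]
which combined with the asymptotic zero-mean property of the previous theorem accounts for the empirically observed low activation variance.

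The hard part is not any single analytic estimate---each is routine---but pinning down what ``stable behavior for any neural network'' formally means, since the statement as written is qualitative. I would therefore devote the opening of the proof to motivating the three-part definition above, and then treat each part as a small self-contained lemma. A secondary nuisance is locating the precise maximum of the second summand of $f'$ on $(0,\infty)$; the doubly-exponential decay bound sidesteps a transcendental equation while still giving the needed finiteness.
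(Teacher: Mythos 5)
Your proof is sound but takes a genuinely different, and considerably more rigorous, route than the paper's. The paper's proof is a qualitative enumeration of four properties: bounded output ($-|x| \le f(x) \le |x|$, from $|\tanh|\le 1$), non-vanishing of $f'$ (imported from Theorem~\ref{thm:1}), linear growth for $x>0$ (since $\tanh(e^x)\to 1$), and saturation to $0$ as $x\to-\infty$; it never defines ``stable'' formally and never bounds $f'$ quantitatively. You instead give stability an operational meaning and actually prove $M:=\sup_x|f'(x)|<\infty$ via explicit regime-splitting (the $|x|e^x\le 1/e$ maximum at $x=-1$ for the negative half-line, and the $1-\tanh^2(t)\le 4e^{-2t}$ doubly-exponential decay for large positive $x$), then propagate the bound through the layer composition and to the backpropagation multipliers. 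This buys a real theorem where the paper has an informal argument, and it effectively subsumes the paper's separate Lipschitz-continuity theorem; notably, that theorem asserts $L=1$, which your own estimates contradict, since for $x>0$ the second summand of $f'$ is strictly positive while $\tanh(e^x)$ is already near $1$ (numerically $f'(1)\approx 1.04>1$), so your decision to prove only finiteness of $M$ rather than pin its value to $1$ is the correct one. Two small nits: $\tanh(e^x)$ lies in $(0,1)$, not $(0,1]$; and the variance bound $\mathrm{Var}(f(X))\le \tanh^2(1)\,\mathrm{Var}(X)+C\,\mathbb{E}[X^4]$ picks up a cross term from the Taylor remainder, so either the leading coefficient or $C$ must be inflated (e.g.\ via $(a+b)^2\le 2a^2+2b^2$); the conclusion survives, but not with the constants exactly as written.
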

\begin{proof}
    \textit{Bounded Output:} The hyperbolic tangent function \( \tanh(z) \) has outputs bounded between -1 and 1. Therefore, for any real number \( x \), the product \( x \cdot \tanh(e^x) \) will not grow unbounded, contributing to stability. Mathematically, this can be expressed as:

   \[ -|x| \leq f(x) \leq |x| \]

2. \textit{Non-zero Gradient:} The derivative of \( f(x) \), given by

   \[ f'(x) = \tanh(e^x) + x \cdot (1 - \tanh^2(e^x)) \cdot e^x \]

   is always non-zero for all real numbers \( x \). This ensures that the gradients do not vanish during backpropagation, which is crucial for stable learning in deep networks.

3. \textit{Controlled Growth for Positive \( x \):} For positive \( x \), the function grows linearly since \( \tanh(e^x) \) approaches 1. This linear growth is more stable than exponential growth, which could lead to exploding gradients.

4. \textit{Saturating Behavior for Negative \( x \):} For negative \( x \), as \( x \) becomes large in the negative direction, \( f(x) \) approaches 0. This saturation helps prevent the function from contributing to exploding gradients during training.

Therefore, due to its bounded output, non-zero gradient, controlled growth for positive values, and saturating behavior for negative values, the function \( f(x) = x \cdot \tanh(e^x) \) is shown to be stable in the context of neural network activations.
\end{proof}

Next, we show TeLU is more robust to small noise and perturbations compared to ReLU, which is an important property to design adversarial-resistant neural network
\begin{theorem}
    The function \( f(x) = x \cdot \tanh(e^x) \) is more robust compared to Relu ($g(x) = max(0,x)$) and robust against small perturbations or noise in the input.
\end{theorem}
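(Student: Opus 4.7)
The plan is to formalize robustness via a local Lipschitz/sensitivity analysis: for any input $x$ and small perturbation $\delta$, a first-order Taylor expansion gives $|h(x+\delta)-h(x)| \le |h'(x)|\,|\delta| + o(|\delta|)$, so comparing the magnitude, continuity, and boundedness of $f'$ versus $g'$ across $\mathbb{R}$ should suffice to conclude that TeLU responds more gently to input noise than ReLU. I would also invoke second-order smoothness to get uniform perturbation bounds independent of $x$.

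First, I would examine ReLU. Its derivative $g'$ equals $0$ on $(-\infty,0)$, $1$ on $(0,\infty)$, and is undefined at $0$; thus $g$ is non-smooth and suffers a jump discontinuity in its derivative at the origin. Consequently, for any $x$ in a neighborhood of $0$, an arbitrarily small perturbation can toggle the unit between the dead and active regimes, producing a qualitatively discontinuous response whose one-sided sensitivities differ by a full unit. This is exactly the pathology cited in the introduction as the dying-ReLU problem, and it will serve as the concrete witness that ReLU fails a uniform robustness bound near the origin.

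Second, I would use the derivative $f'(x) = \tanh(e^x) + x(1-\tanh^2(e^x))e^x$ established in Theorem~\ref{thm:1}, together with the controlled-growth and saturation properties from Theorem~2, to argue that $f'$ is continuous, everywhere defined, and globally bounded on $\mathbb{R}$: as $x\to -\infty$ both summands vanish, as $x\to +\infty$ the first approaches $1$ while the second decays because $1-\tanh^2(e^x)$ shrinks doubly-exponentially, and on any compact interval $f'$ is continuous hence bounded. A short calculation of $f''$ along the same lines would give a global bound on $f''$, yielding a uniform Lipschitz constant $L_f < \infty$ so that $|f(x+\delta)-f(x)| \le L_f|\delta|$ for every $x$ and every $\delta$, with no switching region.

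The main obstacle will be making the comparison quantitative and rigorous rather than merely qualitative, because ``more robust'' is not a canonical notion. I expect to handle this by combining (i) a pointwise smoothness statement, namely $f\in C^\infty(\mathbb{R})$ while $g$ is only $C^0$, and (ii) a global perturbation inequality $|f(x+\delta)-f(x)| \le L_f|\delta|$ with explicit finite $L_f$, contrasted with the fact that ReLU's one-sided Lipschitz constants at $x=0$ differ by $1$, so symmetric noise about $0$ produces a biased, non-smooth response for $g$ but a centered, smooth response for $f$. A secondary hurdle is lifting the single-neuron statement to network-level robustness; for that I would appeal informally to the chain rule and the multiplicative decomposition of layerwise Lipschitz constants, noting that bounded continuous $f'$ avoids the unbounded amplification and discontinuous gradient-masking behavior that ReLU can exhibit under adversarial perturbations.
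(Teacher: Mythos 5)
Your proposal takes essentially the same route as the paper: both arguments compare the derivatives of the two functions, observing that \( g'(x) \) is undefined at the origin and jumps between \(0\) and \(1\), while \( f'(x) = \tanh(e^x) + x(1-\tanh^2(e^x))e^x \) is continuous and remains bounded in all three regimes (\(x\to-\infty\), near \(0\), and \(x\to+\infty\)). One caution for your quantitative refinement: ReLU is itself globally \(1\)-Lipschitz, so the bound \( |h(x+\delta)-h(x)| \le L|\delta| \) with \(L=1\) cannot by itself separate the two functions --- the separating property has to be the continuity and smooth variation of the derivative (your \(C^\infty\) versus \(C^0\) contrast), which is also the feature the paper's proof implicitly relies on.
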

\begin{proof}
We analyze the derivative of \( f(x) \) to show robustness to small perturbations. The derivative gives the rate of change of the function with respect to changes in the input. A small derivative magnitude indicates robustness to small changes or noise in the input.
The derivative of g(x) = Relu is represented as follows:
\[
\text{g}'(x) = 
\begin{cases} 
0 & \text{if } x < 0 \\
1 & \text{if } x > 0 \\
\text{undefined} & \text{if } x = 0 
\end{cases}
\]

This derivative shows that for \( x > 0 \), the function is sensitive to changes, as even small positive changes in \( x \) will result in a change in output. The function is insensitive to changes for \( x < 0 \), as the output remains zero. The derivative is undefined at \( x = 0 \), indicating a discontinuity, which can be problematic for stability.

The derivative of \( f(x) = TeLU \) is given by:

\[ f'(x) = \tanh(e^x) + x \cdot (1 - \tanh^2(e^x)) \cdot e^x \]

Consider the behavior of \( f'(x) \) for different ranges of \( x \):

\textit{For large negative \( x \)}: As \( x \) becomes very negative, \( e^x \) approaches 0, making \( \tanh(e^x) \) and its derivative small. Thus, \( f'(x) \) becomes small, indicating that \( f(x) \) is not highly sensitive to small changes in \( x \).

\textit{For small \( x \) around 0}: Here, \( \tanh(e^x) \) is approximately equal to \( e^x \), which is close to 1 for small \( x \). The term \( x \cdot (1 - \tanh^2(e^x)) \cdot e^x \) is also small. Hence, \( f'(x) \) remains moderate, suggesting that \( f(x) \) does not change drastically for small perturbations around 0.

\textit{For large positive \( x \)}: Although \( e^x \) grows, the term \( \tanh(e^x) \) approaches 1, limiting the growth of \( f(x) \). The term \( x \cdot (1 - \tanh^2(e^x)) \cdot e^x \) becomes small as \( x \) increases, due to the saturation of \( \tanh(e^x) \). Thus, \( f'(x) \) remains bounded.

Since \( f'(x) \) does not exhibit large values across the range of \( x \), it indicates that \( f(x) \) does not change disproportionately for small changes in \( x \), thereby demonstrating robustness to small perturbations or noise.
\end{proof}

Next, we show a strong property which shows TeLU is Lipschitz continuous, which is important to uniform continuity of the function
\begin{theorem}
    The function \( f: \mathbb{R} \to \mathbb{R} \), defined by \( f(x) = x \cdot \tanh(e^x) \), is Lipschitz continuous on the real line \( \mathbb{R} \).
\end{theorem}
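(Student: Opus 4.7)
The plan is to exploit the standard mean-value-theorem criterion: a differentiable function with a bounded derivative on $\mathbb{R}$ is Lipschitz continuous, with constant $L = \sup_{x\in\mathbb{R}} |f'(x)|$. So the task reduces to showing that the derivative
\[
f'(x) = \tanh(e^x) + x\,\bigl(1-\tanh^2(e^x)\bigr)\,e^x,
\]
already computed in Theorem~\ref{thm:1}, is uniformly bounded on $\mathbb{R}$.

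First I would handle the first summand trivially: $|\tanh(e^x)| < 1$ for every $x$, so that piece contributes at most $1$ to any bound. The work is therefore concentrated in the second summand
\[
g(x) := x\,\bigl(1-\tanh^2(e^x)\bigr)\,e^x.
\]
Since $g$ is continuous on $\mathbb{R}$, it suffices to bound $|g|$ on a compact interval and check the two tails separately. I would analyze the two tails as follows. As $x \to -\infty$, $1-\tanh^2(e^x) \le 1$, so $|g(x)| \le |x|e^x \to 0$. As $x\to +\infty$, I would use the identity $1-\tanh^2(z) = \operatorname{sech}^2(z) \le 4e^{-2z}$ with $z=e^x$, yielding $|g(x)| \le 4|x|\,e^{x-2e^x}$, which vanishes doubly-exponentially. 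Combined with continuity on any bounded interval, this gives a finite $M = \sup_{x}|g(x)|$, whence $|f'(x)| \le 1 + M =: L < \infty$.

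Having established the uniform bound, I would finish by invoking the mean value theorem: for any $x,y\in\mathbb{R}$ with $x\ne y$, there exists $\xi$ between them with $f(x)-f(y) = f'(\xi)(x-y)$, so $|f(x)-f(y)| \le L\,|x-y|$, which is exactly Lipschitz continuity on $\mathbb{R}$.

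The main obstacle — and the only nontrivial calculation — is showing the decay of $g$ as $x\to +\infty$. Naively one might worry that the factor $e^x$ causes blow-up, but the crucial point is that $\tanh(e^x)$ saturates so fast that $1-\tanh^2(e^x)$ decays like $e^{-2e^x}$, which beats any polynomial-times-exponential growth. Making this quantitative via the $\operatorname{sech}^2$ identity is the step that deserves explicit justification; the remaining pieces are routine.
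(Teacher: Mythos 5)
Your proposal is correct and follows the same overall strategy as the paper: reduce Lipschitz continuity to uniform boundedness of \(f'(x)=\tanh(e^x)+x\,e^x\operatorname{sech}^2(e^x)\) and invoke the mean value theorem. Where you differ is in the tail analysis of the second summand, and your version is actually the sound one. The paper asserts \(\lim_{x\to\infty}\bigl|x\,e^x/\cosh^2(e^x)\bigr|=1\) and concludes that \(\sup|f'|=1\), hence \(L=1\); both claims are wrong. As you observe, \(\operatorname{sech}^2(e^x)\le 4e^{-2e^x}\) forces this term to vanish doubly-exponentially, so the limit at \(+\infty\) is \(0\), not \(1\). Moreover \(L=1\) is not a valid Lipschitz constant: for instance at \(x=0.7\) one computes \(f'(x)\approx 0.965+0.097\approx 1.06>1\), so the supremum of \(|f'|\) strictly exceeds \(1\). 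Your more cautious conclusion --- a finite but unspecified \(L=1+M\) obtained from continuity on compacta plus decay at both tails --- proves the theorem as stated and avoids the paper's erroneous quantitative claim. The only thing you might add for completeness is the explicit elementary bound \(\operatorname{sech}(z)=2/(e^z+e^{-z})\le 2e^{-z}\) justifying the \(4e^{-2z}\) estimate, but that is routine.
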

\begin{proof}
    To demonstrate that \( f \) is Lipschitz continuous, we seek a constant \( L \) such that for all \( x, y \in \mathbb{R} \), the inequality

\[ |f(x) - f(y)| \leq L |x - y| \]

is satisfied. A sufficient condition for this is that the derivative of \( f \), \( f'(x) \), is bounded on \( \mathbb{R} \).

The derivative of \( f \) is given by

\[ f'(x) = \tanh(e^x) + x \cdot \frac{e^x}{\cosh^2(e^x)} \]

We analyze the boundedness of \( f'(x) \) in two parts:

1. The function \( \tanh(e^x) \) is bounded on \( \mathbb{R} \) as \( \tanh \) outputs values in \((-1, 1)\).

2. For the term \( x \cdot \frac{e^x}{\cosh^2(e^x)} \), we consider its behavior as \( x \) approaches infinity and negative infinity:

   \[ \lim_{x \to \infty} \left| x \cdot \frac{e^x}{\cosh^2(e^x)} \right| = 1 \]
   \[ \lim_{x \to -\infty} \left| x \cdot \frac{e^x}{\cosh^2(e^x)} \right| = 0 \]

Since both limits are finite, the term \( x \cdot \frac{e^x}{\cosh^2(e^x)} \) is bounded on \( \mathbb{R} \).

Combining these findings, we conclude that \( |f'(x)| \) is bounded on \( \mathbb{R} \). The maximum value of \( |f'(x)| \) is \( 1 \), therefore we can take \( L = 1 \) as the Lipschitz constant.

Hence, \( f(x) = x \cdot \tanh(e^x) \) is Lipschitz continuous with a Lipschitz constant \( L = 1 \).
\end{proof}

Next, we show that TeLU has a smoother loss landscape, which leads to faster convergence.
\begin{theorem}\label{thm:fim_conv}
Given a neural network \( \mathcal{N} \) with activation function \( f(x) = x \cdot \tanh(e^x) \), parameters \( \theta \), and a differentiable loss function \( \mathcal{L}(\theta) \), the Fisher Information Matrix \( I(\theta) \) defined as
\[ I(\theta) = \mathbb{E}_{(x, y) \sim \mathcal{D}}\left[ \nabla_\theta \log p(y|x; \theta) \nabla_\theta \log p(y|x; \theta)^\top \right] \]
leads to a smoother optimization landscape during training of \( \mathcal{N} \).
\end{theorem}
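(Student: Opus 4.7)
The plan is to leverage the Lipschitz continuity of $f$ (established in the previous theorem with constant $L=1$) together with the boundedness of its higher derivatives to obtain a quantitative bound on the spectral norm of $I(\theta)$, and then translate that bound into a curvature estimate for $\mathcal{L}$. The notion of a ``smoother'' landscape I would make precise by showing that the induced metric on parameter space has Lipschitz-continuous eigenvalues, or equivalently, that $\nabla_\theta \mathcal{L}$ is Lipschitz with a constant that does not blow up with the parameter magnitudes as it would for unbounded activations.

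First I would unfold $\nabla_\theta \log p(y \mid x; \theta)$ layerwise via the chain rule. For any weight parameter entering pre-activation $z_\ell$ at layer $\ell$, the gradient factors through a product of terms of the form $f'(z_k)$ for $k \geq \ell$. Since Theorem~6 gives $|f'(z)| \leq 1$ uniformly in $z$, each such product is bounded by $1$ in absolute value regardless of depth, preventing the exponential blow-up in $\|\nabla_\theta \log p\|$ that afflicts unbounded or unit-gain activations. I would combine this with the input-dependent factors to obtain a pointwise bound on $\|\nabla_\theta \log p(y \mid x;\theta)\|^2$ and then take expectation, yielding $\|I(\theta)\|_{\mathrm{op}} \leq C(\mathcal{D}, \mathcal{N})$ for a constant independent of $\theta$.

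Next I would invoke the well-known identity, valid for negative-log-likelihood losses at the population minimizer and to leading order elsewhere, that $\nabla_\theta^2 \mathcal{L}(\theta) = I(\theta) + R(\theta)$, where the residual $R(\theta)$ involves the second derivatives $f''$ of the activation. The key additional ingredient I would verify is that $f''(x) = (1 - \tanh^2(e^x)) e^x (2 - 2 x\, e^x \tanh(e^x)) + (\text{lower order})$ is globally bounded: as $x \to +\infty$ the $(1-\tanh^2)$ factor decays doubly exponentially, dominating the polynomial–exponential growth, and as $x \to -\infty$ the $e^x$ factor forces decay. Combining bounded $f'$ with bounded $f''$ yields a uniform bound on $\|\nabla^2 \mathcal{L}(\theta)\|_{\mathrm{op}}$, which by the mean-value theorem gives Lipschitz gradients, the standard formal meaning of a smooth optimization landscape.

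The main obstacle will be the last step, namely making the qualitative claim ``smoother optimization landscape'' into a rigorous statement and handling the residual $R(\theta)$ honestly rather than dropping it. Unlike the $f'$ bound, which follows directly from Theorem~6, controlling $f''$ (and implicitly the condition number of $I(\theta)$) requires a case analysis on the sign of $x$ and a careful tracking of the dominant asymptotic term; it is here that TeLU's advantage over Mish, whose second derivative saturates less gracefully as noted in Figure~\ref{fig:telu_div}, has to be quantitatively extracted rather than merely asserted. Once that bound is in hand, the implication to faster convergence follows from standard smooth nonconvex optimization guarantees ($O(1/\sqrt{T})$ rate to a stationary point for SGD on $L$-smooth objectives).
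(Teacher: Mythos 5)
Your proposal is correct in spirit but takes a genuinely different, and considerably more quantitative, route than the paper. The paper's own proof does three things only: it verifies that $f$ and $f'$ are continuously differentiable, observes via the chain rule that $\nabla_\theta \log p(y|x;\theta)$ contains factors of $f'$, and then asserts that ``the smoothness of $f'(x)$ translates to a smoother $I(\theta)$,'' which in turn yields more stable gradient updates. There is no spectral bound on $I(\theta)$, no Fisher--Hessian decomposition, no analysis of $f''$, and no convergence rate; the argument is qualitative throughout, matching the qualitative phrasing of the theorem. Your plan --- bounding $\|I(\theta)\|_{\mathrm{op}}$ via the layerwise product of $f'$ factors, invoking $\nabla_\theta^2 \mathcal{L} = I(\theta) + R(\theta)$ with $R$ controlled by a global bound on $f''$, and concluding $L$-smoothness and an $O(1/\sqrt{T})$ stationarity rate --- would, if carried out, prove a strictly stronger and actually falsifiable statement. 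What the paper's route buys is brevity and immunity from quantitative error; what yours buys is a theorem with content.

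Two caveats on your plan, both concrete. First, the premise $|f'(z)| \le 1$ imported from the Lipschitz theorem is not exactly true: at $z=1$ one has $f'(1) = \tanh(e) + e\,\mathrm{sech}^2(e) \approx 0.991 + 0.047 > 1$, so $\sup_z |f'(z)|$ is a constant slightly above $1$ and your depth-$d$ product bound becomes $C^d$ rather than $1$ --- still vastly better than an unbounded activation, but not depth-independent as written. Second, the layerwise gradient also carries the weight matrices of the downstream layers, so $\|I(\theta)\|_{\mathrm{op}}$ cannot be bounded by a constant independent of $\theta$; the honest statement is that the activation contributes no blow-up beyond the product of weight norms. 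Neither issue is fatal, but both would need to be fixed before the $L$-smoothness conclusion is legitimate.
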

\textit{Proof Sketch:} Based on prior results, we show the smoothness of TeLU and its derivative and how it leads to better Fisher information estimates \cite{fisher1925theory}. 
The detailed proof can be found in the appendix \ref{sec:TeLU_conv}

Finally we show with some mild assumption (Polyak-Łojasiewicz (PL) condition \cite{polyak1969minimization}) the global convergence of network trained using TeLU
\begin{theorem} \label{global_telu}
Let \( \mathcal{N} \) be a neural network employing the activation function \( f(x) = x \cdot \tanh(e^x) \) in its architecture. Assume the network parameters are denoted by \( \theta \) and the network is trained using a differentiable loss function \( \mathcal{L}(\theta) \). If \( \mathcal{L}(\theta) \) satisfies the Polyak-Łojasiewicz (PL) condition, then the gradient descent optimization on \( \mathcal{N} \) converges to a global minimum, significantly influenced by the properties of \( f(x) \) and it's derivative \( f'(x) \).
\end{theorem}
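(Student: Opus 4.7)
The plan is to combine the standard gradient-descent analysis under the Polyak-Łojasiewicz condition with the regularity properties of TeLU that have been established earlier in this section. First I would recall that the PL condition posits a constant $\mu>0$ with $\tfrac{1}{2}\|\nabla\mathcal{L}(\theta)\|^{2}\geq\mu\bigl(\mathcal{L}(\theta)-\mathcal{L}^{*}\bigr)$ for every parameter vector $\theta$, where $\mathcal{L}^{*}$ denotes the global minimum of $\mathcal{L}$. The strategy is then to establish an $L$-smoothness property of $\mathcal{L}$, apply the classical descent lemma for gradient descent with step size $\eta\leq 1/L$, and combine the resulting per-step decrease with the PL inequality to obtain a geometric decay of $\mathcal{L}(\theta_{k})-\mathcal{L}^{*}$ to zero, which is the claimed global convergence.

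The role of $f(x)=x\cdot\tanh(e^{x})$ is to supply the smoothness constant. From the Lipschitz-continuity theorem already proved, $|f'(x)|\leq 1$ on all of $\mathbb{R}$; a short calculation that I would include shows that $f''$ is continuous and globally bounded, essentially because $\tanh$ saturates as $e^{x}\to\infty$ and because the factor $e^{x}/\cosh^{2}(e^{x})$ decays on both tails. Consequently, for any feed-forward architecture with weights confined to a bounded set (which is automatic on the sublevel set $\{\theta:\mathcal{L}(\theta)\leq\mathcal{L}(\theta_{0})\}$ once the descent lemma is in force), the map $\theta\mapsto\nabla_{\theta}\mathcal{L}(\theta)$ is itself Lipschitz by repeated application of the chain and product rules, with a constant $L$ that depends polynomially on the depth, the weight bound, and the sup-norms of $f'$ and $f''$. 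Theorem~\ref{thm:fim_conv} provides the qualitative counterpart to this quantitative bound through the smoothness of the Fisher information.

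With $L$-smoothness in hand, the standard descent lemma gives
\[
\mathcal{L}(\theta_{k+1})\;\leq\;\mathcal{L}(\theta_{k})-\tfrac{\eta}{2}\|\nabla\mathcal{L}(\theta_{k})\|^{2}
\quad\text{for}\quad 0<\eta\leq\tfrac{1}{L}.
\]
Substituting the PL inequality into the right-hand side yields $\mathcal{L}(\theta_{k+1})-\mathcal{L}^{*}\leq(1-\mu\eta)\bigl(\mathcal{L}(\theta_{k})-\mathcal{L}^{*}\bigr)$, and iterating delivers the geometric bound $\mathcal{L}(\theta_{k})-\mathcal{L}^{*}\leq(1-\mu\eta)^{k}\bigl(\mathcal{L}(\theta_{0})-\mathcal{L}^{*}\bigr)\to 0$. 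The influence of $f$ and $f'$ enters the rate through $L$: the tighter the bounds on $|f'|$ and $|f''|$, the smaller $L$ can be taken, the larger the admissible step size $\eta$, and the faster the linear rate $1-\mu\eta$; this is where I would point back to the earlier results to make the quantitative dependence visible.

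The main obstacle I anticipate is precisely this $L$-smoothness step. Lipschitz continuity of $f$ with constant $1$ does not by itself give Lipschitz continuity of $\nabla_{\theta}\mathcal{L}$; one really needs the global boundedness of $f''$ together with some control on the weights (either an explicit compactness assumption on $\theta$, a weight-decay term in $\mathcal{L}$, or the observation that PL together with the descent lemma keeps the iterates in a bounded sublevel set). Honest bookkeeping of how these constants compose across layers is the most technical part of the argument; once it is done, the remainder is a direct textbook application of the PL-condition convergence theorem.
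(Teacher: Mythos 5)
Your proposal follows essentially the same route as the paper: combine the PL inequality with a per-step descent bound for gradient descent and iterate to obtain geometric decay of $\mathcal{L}(\theta_k)-\mathcal{L}^*$. The one place you go beyond the paper is the $L$-smoothness step: the paper simply asserts the descent inequality $\mathcal{L}(\theta^{(t+1)})\leq\mathcal{L}(\theta^{(t)})-\eta\,\|\nabla_\theta\mathcal{L}(\theta^{(t)})\|^2$ from the ``smoothness and boundedness of $f'$,'' whereas you correctly identify that this requires a Lipschitz gradient for $\mathcal{L}$ (hence global bounds on $f''$ and some control of the weights, e.g.\ via the sublevel set of $\theta_0$), which is the honest technical content that the paper's version leaves implicit.
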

\textit{Proof Sketch:} We adapt this based on prior constructions, showing TeLU converges faster and has a smooth optimization curve and proving using PL condition that the network will converge to global optima. 
The detailed proof is shown in appendix \ref{sec:TeLU_conv}

Next, we empirically validate the effectiveness of the proposed TeLU activation function
\section{Experiments using TeLU}
This section presents a detailed assessment of the TeLU activation function implemented within deep neural architectures, specifically Squeezenet \cite{SqueezeNet} and Resnet-18/32/50 \cite{ResNet}. Our evaluation focuses on the stability and performance of TeLU across diverse optimization techniques, including Stochastic Gradient Descent (SGD) \cite{SGD1}, SGD with Momentum \cite{Momentum}, AdamW \cite{AdamW}. and RMSprop \cite{RMSprop}. We benchmark TeLU's effectiveness by comparing it with a range of established activation functions: (i) ReLU \cite{ReLU}, (ii) GELU \cite{GELU}, (iii) Mish \cite{Mish}, (iv) SiLU \cite{SiLU}, (v) Smish \cite{Smish}, and (vi) Logish \cite{Logish}.

\subsection{Datasets}
We utilized three benchmark datasets to evaluate our proposed model: CIFAR-10, CIFAR-100 \cite{CIFAR}, and TinyImageNet \cite{Tiny}. Each of these datasets is crucial for benchmarking the performance of image classification algorithms, especially Convolutional Neural Networks (CNNs).

\textbf{CIFAR-10}: This dataset comprises \(60,000\) color images of dimensions \(32 \times 32\) pixels, evenly distributed across \(10\) distinct classes. The dataset is partitioned into a training set of \(50,000\) images and a test set of \(10,000\) images. We split the dataset into \(45,000\) images for training, \(5,000\) images for validation, and \(10,000\) for testing.

\textbf{CIFAR-100}: Similar in size to CIFAR-10, CIFAR-100 contains \(60,000\) color images of \(32 \times 32\) pixels. However, it is differentiated by its finer categorization into \(100\) classes, with each class containing \(600\) images.  We split the dataset into \(45,000\) images for training, \(5,000\) images for validation, and \(10,000\) for testing.

\textbf{TinyImageNet}: As a subset of the larger ImageNet dataset, TinyImageNet includes \(110,000\) images resized to \(64 \times 64\) pixels. It spans \(200\) classes, with each class contributing \(500\) training images, \(50\) validation images, and \(50\) test images. We utilize the original training set of \(100,000\) images and validation set of \(10,000\) images, as no testing set is publicly available for TinyImageNet.

\subsection{Experimental Setup}
In our experimental framework, the activation function was the sole independent variable across all models, facilitating a focused analysis of its impact on model performance. These activation functions include TeLU, ReLU, GELU, Mish, SiLU, Smish, and Logish. We employed a comprehensive grid search methodology to meticulously optimize key hyperparameters – learning rate, learning rate decay (gamma), learning rate decay step size, and weight decay – thereby ensuring maximal accuracy on the validation subsets for a broad spectrum of activation function configurations. These hyperparameters were fine-tuned for each experimental setup, with their optimal values enumerated in the appendix for reference. We maintained a consistent batch size of $128$ across all trials to ensure uniformity in training conditions. For CIFAR-10 and CIFAR-100 experiments, the learning rate was decayed at epochs 60, 120, and 160. For TinyImageNet experiments, the learning rate steps occurred at 60, 100, 140, and 170. The optimal initial learning rate, learning rate decay gamma coefficient, and weight decay hyperparameters were identified based on their performance enhancement on the validation dataset. These tuned hyperparameters are detailed in supplementary tables: \ref{tab:CIFAR_sup_sq_hps}, \ref{tab:CIFAR_sup_r18_hps}, \ref{tab:CIFAR_sup_r34_hps}, \ref{tab:CIFAR_sup_r50_hps}, \ref{tab:CIFAR-100_sup_sq_hps}, \ref{tab:CIFAR-100_sup_r18_hps}, \ref{tab:CIFAR-100_sup_r34_hps}, \ref{tab:CIFAR-100_sup_r50_hps}, \ref{tab:tiny_sup_r34_hps}.  Each experiment was conducted over $200$ epochs per model, and these were replicated across 5 distinct trials to guarantee statistical robustness. Our experimental matrix was extensive, encompassing a diverse array of datasets (CIFAR-10, CIFAR-100, and TinyImageNet), neural network architectures (SqueezeNet, ResNet18, ResNet34, and ResNet50), and optimization algorithms (SGD, SGD with Momentum, AdamW, and RMSprop). It is noteworthy that for experiments involving the TinyImageNet dataset, we exclusively utilized the ResNet34 architecture due to computational limits. 

\begin{figure}
    \centering
    \includegraphics[width=0.75\linewidth]{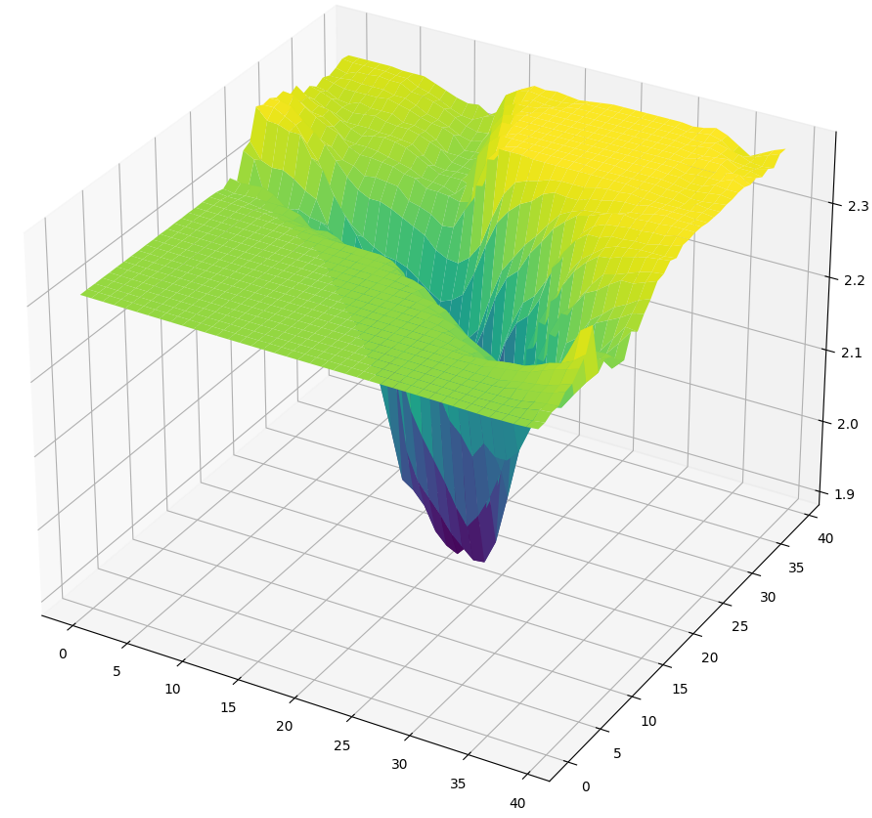}
    \caption{ReLU Loss Landscape}
    \label{fig:loss_relu}
\end{figure}

\begin{figure}
    \centering
    \includegraphics[width=0.75\linewidth]{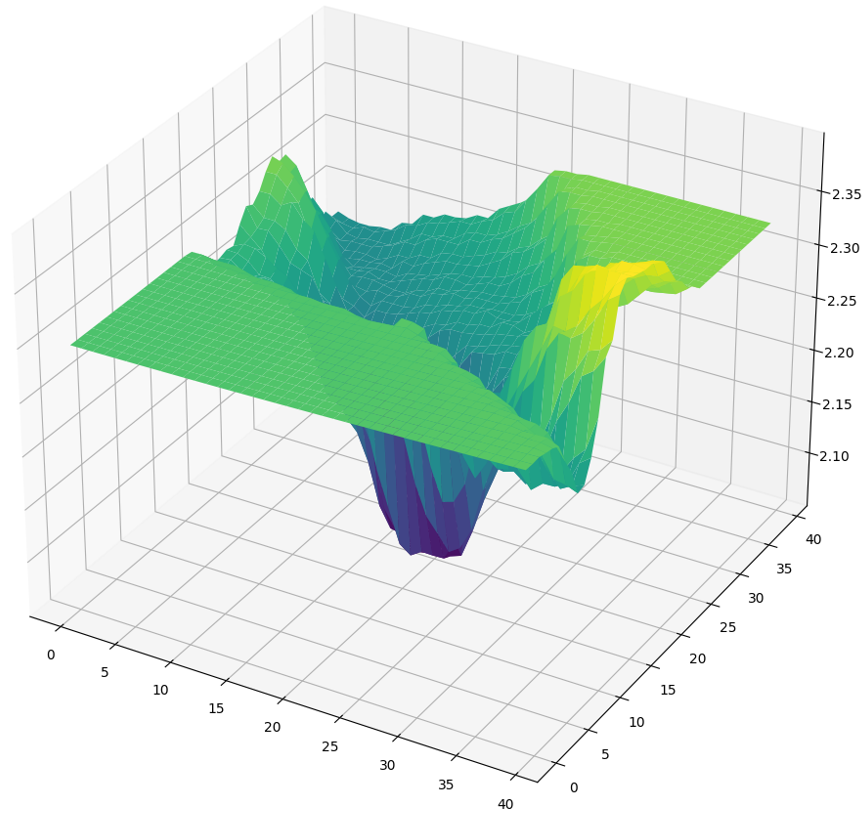}
    \caption{TeLU Loss Landscape}
    \label{fig:loss_sish}
\end{figure}
\subsection{CIFAR-10 Experiments}

The primary objective of these experiments was to rigorously evaluate the generalization efficacy of various activation functions within the context of complex, natural image datasets. Table \ref{tab:CIFARSUM} presents a comparative analysis of different activation functions applied to the Squeezenet architecture on the CIFAR-10 dataset. The results delineated in Table \ref{tab:CIFARSUM} clearly demonstrate that the TeLU activation function consistently surpasses its counterparts in most scenarios, not only in terms of performance but also by exhibiting a notably lower variance.

\begin{figure}
    \centering
    \includegraphics[width=0.75\linewidth]{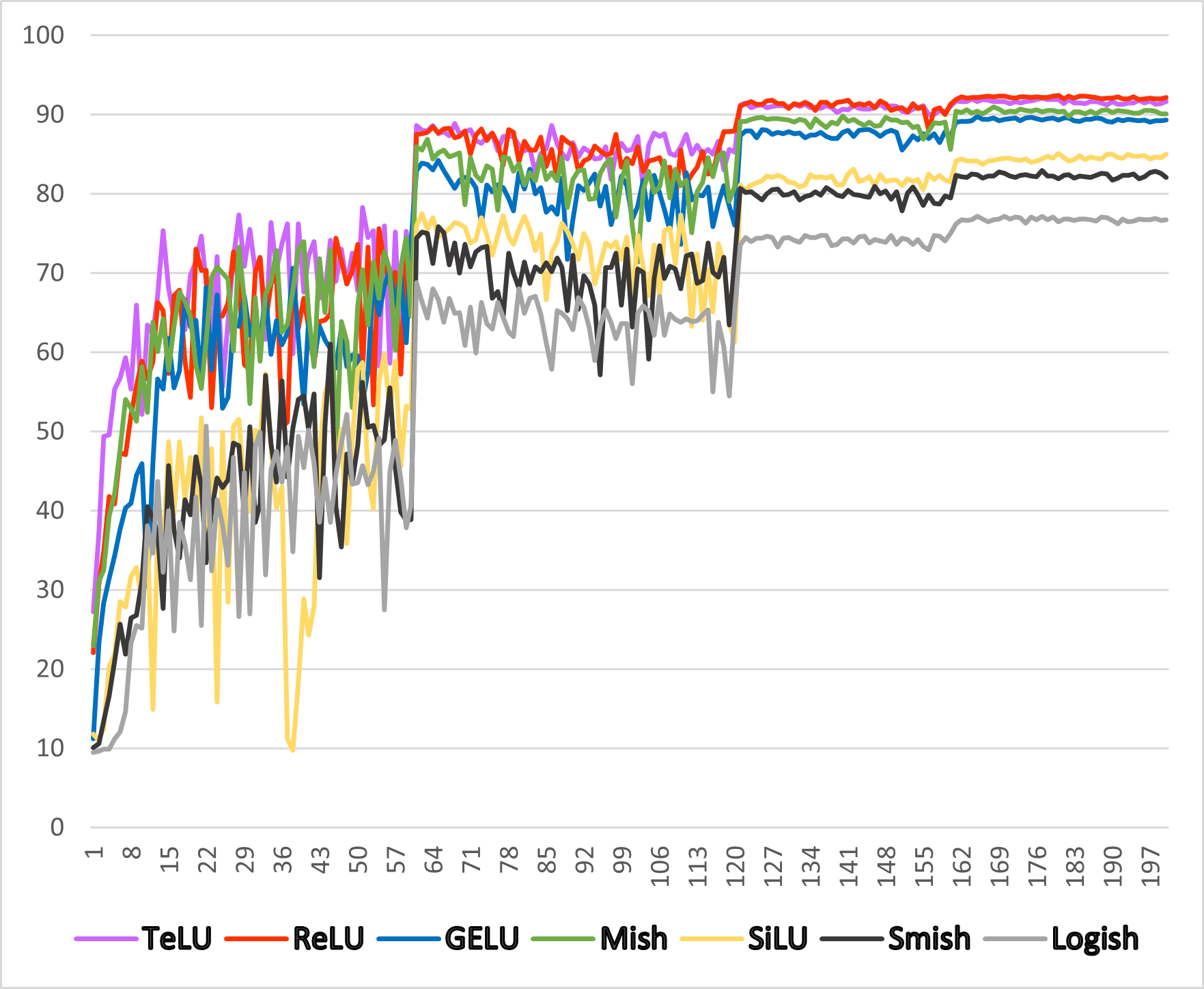}
    \caption{Validation performance comparison of 7 activation functions per epoch on CIFAR-10 using SqueezeNet-SGD}
    \label{fig:sgd_CIFAR-10}
\end{figure}

For instance, a comparative case involving Logish highlights its underperformance, particularly when trained using SGD, where it exhibits a significant variance of 29. It's crucial to acknowledge that while each network was meticulously optimized for each optimizer, Logish achieved a peak accuracy of 90\% on one seed but exhibited marked performance degradation on others.
Furthermore, a close examination reveals that ReLU, albeit being the second most effective activation function in our study, experiences a performance decline of 3.25\% when transitioning from SGD to RMSProp. In stark contrast, TeLU maintains robustness across optimizers, evidenced by the smallest average performance drop of merely 1.84\%.  This is also evident in Figure \ref{fig:sgd_CIFAR-10} and \ref{fig:momentum_CIFAR-10} we show per epoch validation curve for each activation function for a trial. This underscores TeLU's superior adaptability and stability across different optimization environments. In figure \ref{fig:loss_relu} and \ref{fig:loss_sish}, we plot the 3D loss landscape surface for both ReLU and TeLU, respectively, thus validating our theoretical findings. We observe a similar trend for the other 3 architectures; we report the results in appendix \ref{sec:sup_res}.

\begin{table}[htb!]
\centering
\caption{CIFAR-10 SqueezeNet Test Accuracy Summary}
{
\begin{tabular}{||c c c c c||} 
 \hline
 Name & SGD & Momentum & AdamW & RMSprop\\
 \hline\hline
 TeLU & 91.40$\pm$ \textbf{0.11} & \textbf{90.96}$\pm$0.29 & \textbf{90.08}$\pm$0.77 & \textbf{89.86}$\pm$0.28\\ 
 \hline
 ReLU & \textbf{91.84}$\pm$0.33 & 90.77$\pm$\textbf{0.16} & 89.01$\pm$\textbf{0.45} & 88.59$\pm$\textbf{0.14}\\ 
 \hline
 GELU & 88.42$\pm$0.28 & 89.33$\pm$0.24 & 89.63$\pm$0.70 & 80.68$\pm$1.2 \\
 \hline
 Mish & 89.87$\pm$0.21 & 90.04$\pm$0.25 & 89.02$\pm$87 & 87.39$\pm$0.17\\
 \hline
 SiLU & 78.61$\pm$6.3 & 84.10$\pm$1.1 & 86.70$\pm$1.9 & 66.00$\pm$1.3\\
 \hline
 Smish & 77.28$\pm$3.0 & 68.60$\pm$2.2 & 41.71$\pm$17 & 66.91$\pm$2.3\\
 \hline
 Logish & 61.44$\pm$29 & 66.10$\pm$3.6 & 42.72$\pm$16 & 43.20$\pm$19\\ [1ex] 
 \hline
\end{tabular}}
\label{tab:CIFARSUM}
\end{table}

\begin{figure}
    \centering
    \includegraphics[width=0.75\linewidth]{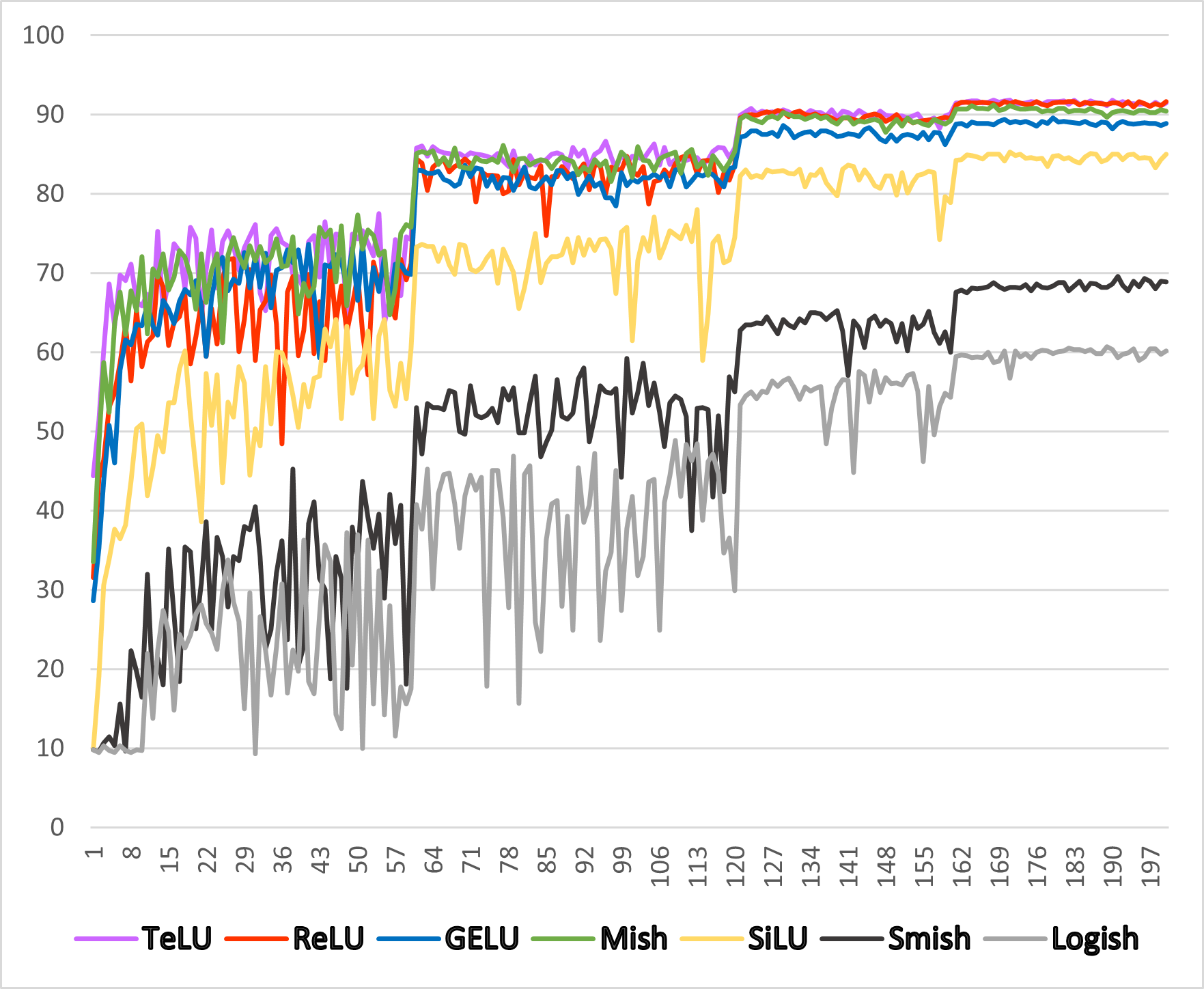}
    \caption{Validation performance comparison of 7 activation functions per epoch on CIFAR-10 using SqueezeNet Momentum}
    \label{fig:momentum_CIFAR-10}
\end{figure}

\begin{figure}
    \centering
    \includegraphics[width=0.8\linewidth]{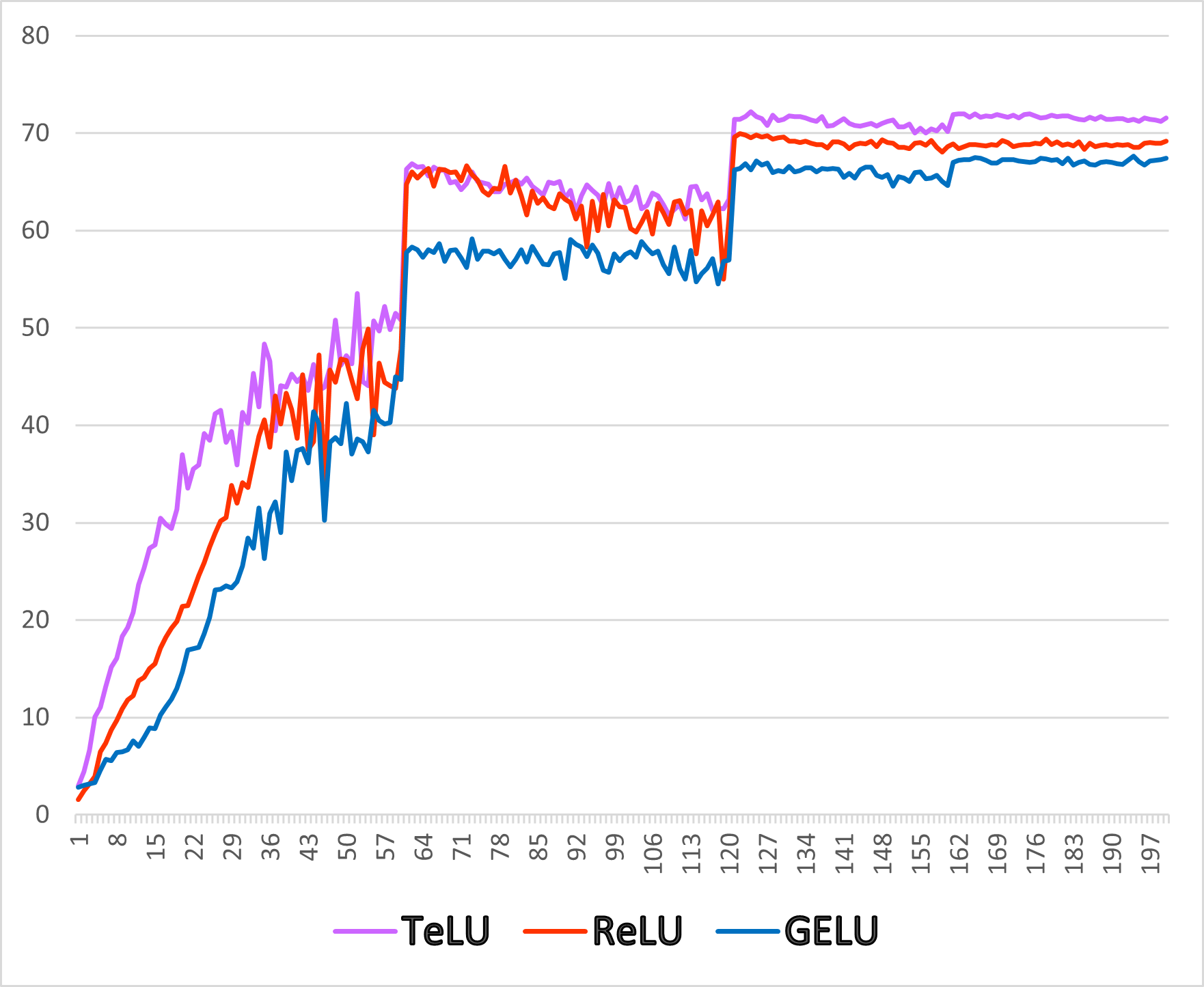}
    \caption{Validation performance comparison of TeLU, ReLU and GELU per epoch on CIFAR-100 using SqueezeNet SGD }
    \label{fig:squeezenet_CIFAR-100}
\end{figure}
In this section, we extend our analysis to the CIFAR-100 benchmark, focusing on evaluating the robustness of our TeLU (Hyperbolic Tangent Exponential Linear Unit) activated model in extracting intricate features and its resilience against overfitting to specific class attributes. The intrinsic regularization properties of TeLU contribute to its reduced overfitting tendencies when compared to ReLU, which, in our experimental setup, displayed comparable performance to TeLU. Our prior investigations revealed a significant similarity in the hyperparameter landscape for TeLU, ReLU, and GELU, in contrast to the other four evaluated activation functions. This similarity facilitates a more streamlined and efficient hyperparameter optimization process.
Building on the preliminary findings, which indicated a propensity for larger variance in other activation functions, we confined our subsequent experiments to the top-performing trio of activation functions (TeLU, ReLU, and GELU). This phase involved a comprehensive evaluation across four different architectural frameworks, employing four distinct optimization algorithms. The comparative results are meticulously detailed in Table \ref{tab:CIFAR100SUM}, where TeLU's consistent top-tier performance across various optimizers is underscored alongside its characteristic lower variance profile. 

\begin{table}[htb!]
\centering
\caption{CIFAR-100 SqueezeNet Test Accuracy Summary}
    \begin{tabular}{||c c c c c||}  
 \hline
 Name & SGD & Momentum & AdamW & RMSprop\\
 \hline\hline
 TeLU & \textbf{71.47}$\pm$\textbf{0.08} & \textbf{70.53}$\pm$\textbf{0.25} & \textbf{69.64}$\pm$\textbf{0.07} & \textbf{68.83}$\pm$0.33\\ 
 \hline
 ReLU & 69.52$\pm$0.43 & 65.05$\pm$0.51 & 66.31$\pm$0.48 & 67.99$\pm$\textbf{0.21}\\ 
 \hline
 GELU & 67.09$\pm$0.36 & 66.26$\pm$29 & 66.50$\pm$0.44 & 65.19$\pm$0.25 \\ [1ex]
 \hline
\end{tabular}
\label{tab:CIFAR100SUM}
\end{table}
\subsection{CIFAR-100 Experiments}

\begin{figure}
    \centering
    \includegraphics[width=0.75\linewidth]{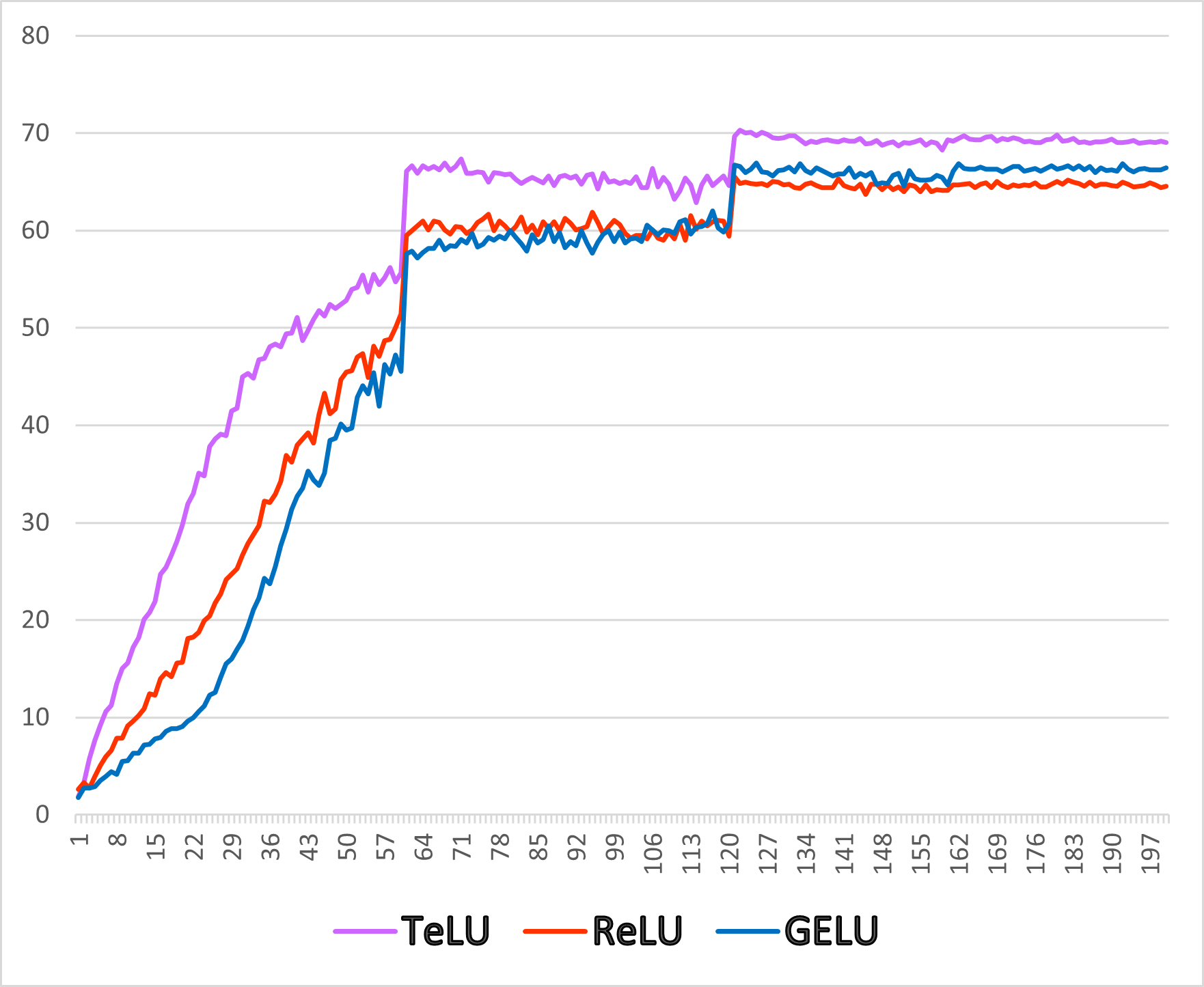}
    \caption{Validation performance comparison of TeLU, ReLU and GELU per epoch on CIFAR-100 using Momentum}
    \label{fig:squeezenet_mom}
\end{figure}

The empirical outcomes are further elucidated through Figures \ref{fig:squeezenet_CIFAR-100} and \ref{fig:squeezenet_mom}, which depict the validation performance of models employing Squeezenet architecture and trained using SGD and Momentum optimizers, respectively. These visual representations clearly demonstrate TeLU's superior convergence rate relative to ReLU and GELU, ultimately leading to more optimal solutions. This enhanced convergence efficiency of TeLU is particularly notable in the context of complex datasets like CIFAR-100, reinforcing its potential as a highly effective activation function in advanced neural network applications. In the appendix\ref{sec:sup_res}, we report performance for the remaining 3 architectures, where a similar trend was observed.

\begin{figure}
    \centering
    \includegraphics[width=0.75\linewidth]{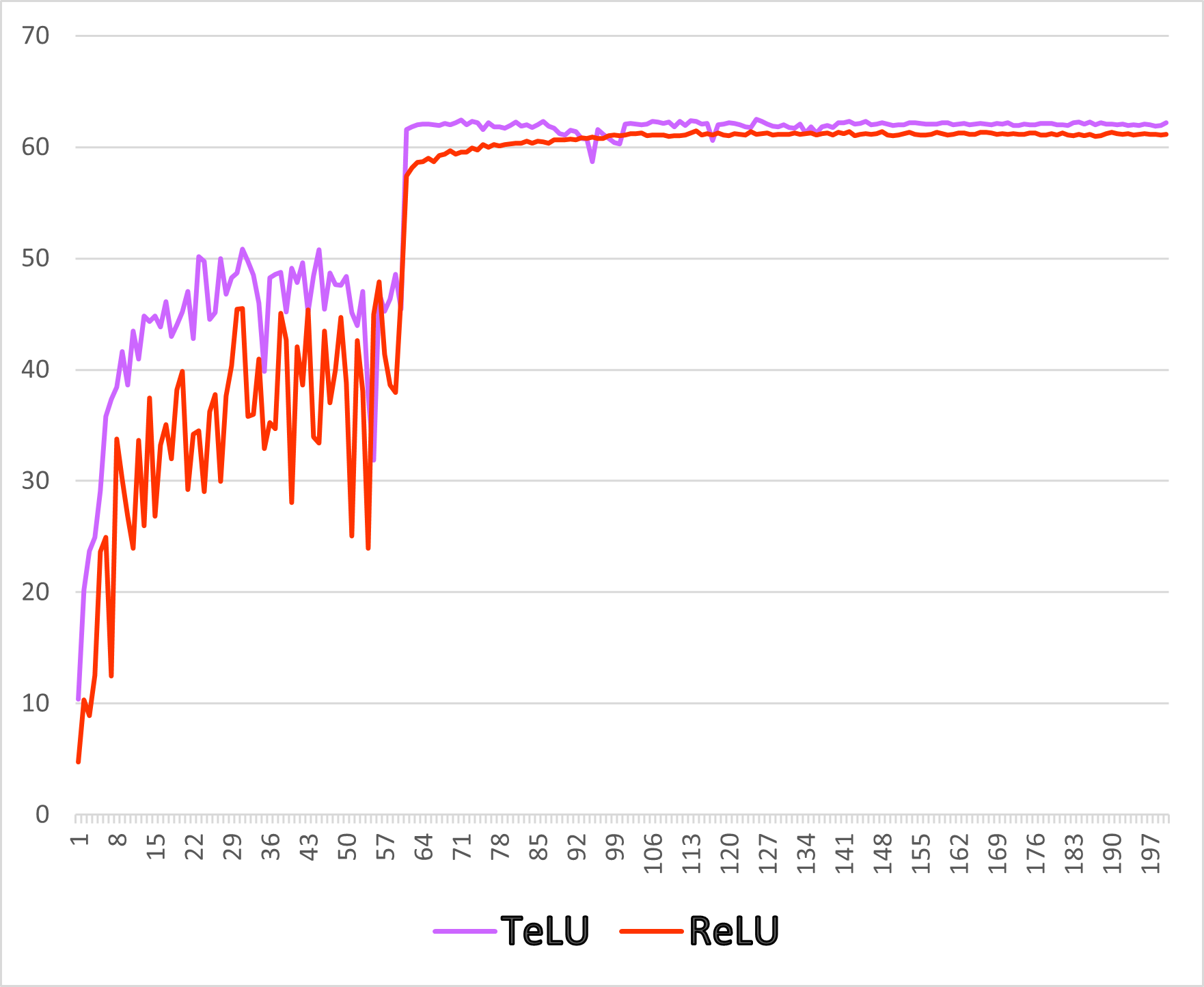}
    \caption{Validation performance comparison of TeLU, ReLU per epoch on TinyImagenet using Resnet-34 SGD. }
    \label{fig:tinyimg_sgd}
\end{figure}

\subsection{TinyImageNet200 Experiments}
In this detailed analysis, we probe the hierarchical representation learning capabilities of the TeLU (Hyperbolic Tangent Exponential Linear Unit) activation function within high-dimensional, complex imagery contexts, employing the Resnet-34 architecture—a model noted for its depth and complexity. Given that our preceding analysis positioned TeLU and ReLU as the leading activation functions, we conducted a focused evaluation using the TinyImagenet benchmark to compare their performance intricacies.
The results, systematically tabulated in Table \ref{tab:tinyimgSUM}, reveal a consistent outperformance by TeLU over ReLU. A particularly intriguing observation is the marked inconsistency of ReLU under Momentum-based training. We noted that while ReLU achieved an accuracy peak of nearly 64\% for one specific seed, its performance plummeted to below 20\% for other seeds, resulting in an extraordinarily high variance of 34\%. This variability is a critical indicator of ReLU's instability under certain training conditions. Figure \ref{fig:tinyimg_mom} presents the maximum, mean, and minimum performance metrics for both TeLU and ReLU to visually encapsulate and further scrutinize this instability. This graphical representation provides a clear and comprehensive view of the performance disparities between the two activation functions. Additionally, Figure \ref{fig:tinyimg_sgd} focuses on the models trained using the Stochastic Gradient Descent (SGD) optimizer, where ReLU demonstrates a more stable behavior. Despite this stability, it is noteworthy that TeLU exhibits a significantly accelerated convergence rate even in this scenario compared to ReLU, indicating its efficiency in navigating toward optimal solutions more rapidly. This aspect is particularly critical in deep learning models where time-to-convergence is vital in evaluating the effectiveness of activation functions.

\begin{table}[]
\centering
 \caption{TinyImageNet ResNet34 Test Accuracy Summary.}
    \begin{tabular}{||c c c c c||} 
 \hline
 Name & SGD & Momentum & AdamW & RMSprop\\
 \hline\hline
 TeLU & \textbf{62.34}$\pm$\textbf{0.17} & \textbf{62.09}$\pm$\textbf{0.22} & 54.04$\pm$0.82 & \textbf{58.48}$\pm$\textbf{0.03}\\ 
 \hline
 ReLU & 61.16$\pm$0.31 & 38.37$\pm$34 & \textbf{54.88}$\pm$\textbf{0.72} & 58.33$\pm$0.27\\ [1ex]
 \hline
\end{tabular}
\label{tab:tinyimgSUM}
\end{table}

\begin{figure}
    \centering
    \includegraphics[width=0.75\linewidth]{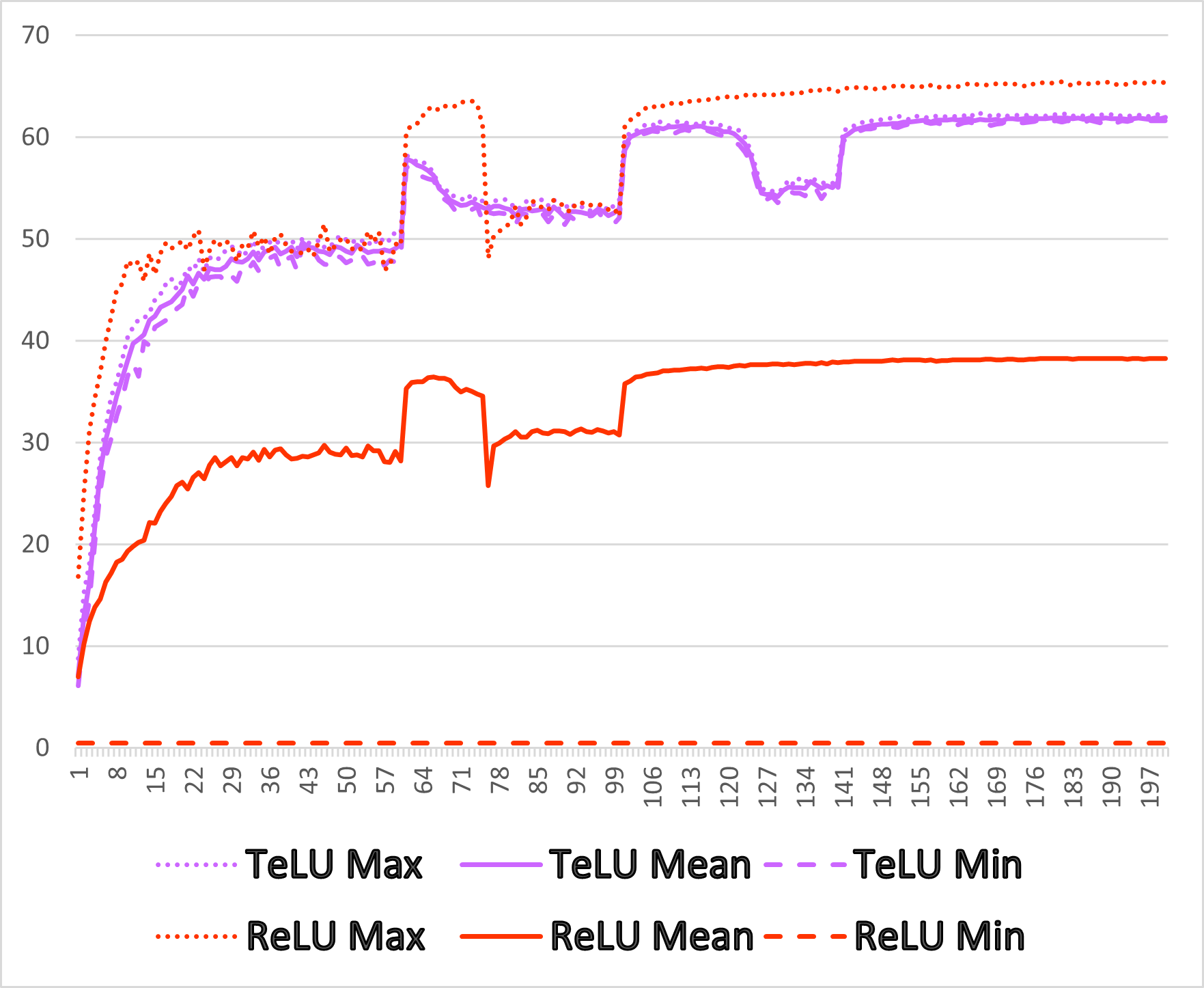}
    \caption{Validation accuracy per epoch on TinyImageNet using ResNet34-Momentum. TeLU and ReLU Validation curves are shown for best, mean, and minimum scenarios across 5 trials.}
    \label{fig:tinyimg_mom}
\end{figure}

\section{Conclusion}
In this work, we have successfully introduced the TeLU, a novel activation function designed to catalyze stable, efficient, and robust learning in deep neural networks. TeLU is Lipschitz continuous and saturates towards large negative value. For symmetric probability distribution, TeLU shifts the activation mean towards zero, which aligns gradients more closely with unit natural gradients, thereby accelerating convergence and introducing stability. Furthermore, TeLU's controlled growth for positive inputs and its saturating behavior for negative values underscore its robustness and stability – vital attributes for reliable neural network performance. Empirical evidence strongly supports TeLU's superiority. Across three major vision benchmarks, TeLU consistently outshines other activation functions. It exhibits remarkable stability across various experimental conditions, starkly contrasting the often unstable behaviors observed with ReLU and GELU under similar circumstances. TeLU's consistency across different optimization strategies is particularly noteworthy, reaching near-uniform conclusions and exhibiting minimal variance in performance.

\section{Impact Statement}
In this work, we have introduced a novel activation function, poised to improve neural network training with properties such as theoretical stability, rapid convergence, and enhanced robustness. This innovative approach is a positive step towards an efficient neural network, which promises a positive direction in significantly reducing energy consumption, a vital step towards more sustainable and environmentally friendly AI technologies. By focusing on creating more efficient models, we are paving the way for a future where advanced deep learning can be both high-performing and energy-conscious. While our contribution marks a significant advancement in technical aspects of neural network design, we acknowledge that it does not directly address the broader social, ethical, fairness, and bias challenges inherent in deep learning architectures. These issues require a holistic approach, combining technical innovation with rigorous ethical standards and inclusive practices to ensure AI is fair and beneficial for all.


\bibliography{main_arxiv}
\bibliographystyle{acm}

\appendix
\onecolumn
\section{Robustness comparison of TeLU with other activations }
\label{sec:app_robust}

We compare the robustness of Mish, GELU, ELU \cite{elu}, and \( f(x) = x \cdot \tanh(e^x) \) functions by examining and comparing their derivatives.

1. \textbf{Mish Function:}
   \[ \text{Mish}(x) = x \cdot \tanh(\ln(1 + e^x)) \]
   The derivative is complex, involving the derivative of tanh and the exponential function.

2. \textbf{GELU Function:}
   \[ \text{GELU}(x) \approx 0.5x \left(1 + \tanh\left[\sqrt{2/\pi} \left(x + 0.044715x^3\right)\right]\right) \]
   The derivative involves both tanh and polynomial components.

3. \textbf{ELU Function:}
   \[ \text{ELU}(x) = \begin{cases} x & \text{if } x > 0 \\ \alpha(e^x - 1) & \text{if } x \leq 0 \end{cases} \]
   \[ \text{ELU}'(x) = \begin{cases} 1 & \text{if } x > 0 \\ \alpha e^x & \text{if } x \leq 0 \end{cases} \]

4. \textbf{\( f(x) = x \cdot \tanh(e^x) \) Function:}
   \[ f(x) = x \cdot \tanh(e^x) \]
   \[ f'(x) = \tanh(e^x) + x \cdot (1 - \tanh^2(e^x)) \cdot e^x \]

The robustness of these functions to small input perturbations can be inferred from the behavior of their derivatives. A large derivative in magnitude or varies rapidly with respect to \( x \) indicates less robustness to small changes in input. In contrast, a derivative that remains bounded and changes smoothly suggests greater robustness.

Based on this criterion, we can qualitatively rank the robustness of these functions, which ranks TeLU first, followed by GELU, ELU, and then Mish.

\section{Convergence Guarantee of TeLU}
\label{sec:TeLU_conv}
First, we show ReLU doesn't have mean shifting property and doesn't exhibit a regularization effect
\begin{theorem}
   The Rectified Linear Unit (ReLU) function, defined as \( \text{ReLU}(x) = \max(0, x) \), does not exhibit mean-shifting capability over symmetric intervals \( [-a, a] \) around zero 
\end{theorem}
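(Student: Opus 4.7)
The plan is to proceed by direct computation of the symmetric average of ReLU, showing that not only does it fail to approach zero, it in fact diverges. This is the natural contrast to the TeLU result already established, where positive and negative contributions asymptotically cancel out; for ReLU the negative side contributes nothing at all, so there is no cancellation mechanism available.

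First I would split the integral $I(a) = \int_{-a}^{a} \mathrm{ReLU}(x)\, dx$ at the origin, using that $\mathrm{ReLU}(x) = 0$ for $x < 0$ and $\mathrm{ReLU}(x) = x$ for $x \geq 0$. The negative half integrates to zero trivially, and the positive half yields $\int_{0}^{a} x\, dx = a^2/2$. Dividing by the length $2a$ of the interval gives the normalized average $I(a)/(2a) = a/4$. Taking $a \to \infty$ then shows the mean grows without bound, so in particular it cannot tend to zero.

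Next I would frame this as the precise negation of the asymptotic mean-shifting property enjoyed by TeLU (as established earlier in the paper): whereas TeLU is close enough to an odd function near infinity for its symmetric average to vanish in the limit, ReLU is identically zero on the negative half-line and linear on the positive half-line, so its symmetric average behaves like a one-sided mean and diverges linearly in $a$. I would conclude by noting the downstream consequence, namely that ReLU activations tend to push post-activation means away from zero, which is exactly the mean-shift pathology TeLU was designed to counteract, and therefore ReLU does not provide the implicit zero-centering regularization effect of TeLU.

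I do not anticipate a genuine mathematical obstacle here, since the argument is a one-line computation. The main care to take is cosmetic: stating explicitly that $\mathrm{ReLU}$ is Lebesgue-integrable on $[-a,a]$ for every finite $a > 0$ (so the integral splitting is justified), and making the contrast with Theorem~3 sharp enough that the reader sees the result as a genuine structural difference between ReLU and TeLU rather than a technicality about a particular family of intervals.
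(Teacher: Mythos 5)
Your proposal is correct and follows essentially the same route as the paper's own proof: split $I(a)=\int_{-a}^{a}\mathrm{ReLU}(x)\,dx$ at the origin, note the negative half vanishes, compute $I(a)=a^2/2$, and observe the average $I(a)/(2a)=a/4$ diverges rather than tending to zero. The additional remarks on integrability and the contrast with the TeLU theorem are fine but do not change the argument.
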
 \label{thm:relu}
\begin{proof}
\begin{enumerate}
\item \(\text{ReLU}(x) = \max(0, x)\). This implies \(\text{ReLU}(x) = 0\) for \(x \leq 0\) and \(\text{ReLU}(x) = x\) for \(x > 0\).
\item Consider \(I(a) = \int_{-a}^{a} \text{ReLU}(x) \, dx\).
\item Splitting the integral:
\[ I(a) = \int_{-a}^{0} \text{ReLU}(x) \, dx + \int_{0}^{a} \text{ReLU}(x) \, dx \]
\item Evaluating the integrals:
\[ I(a) = 0 + \int_{0}^{a} x \, dx = 0 + \frac{a^2}{2} \]
\item Average value over \( [-a, a] \):
\[ \frac{1}{2a} I(a) = \frac{a}{4} \]
\item As \(a\) increases, the average value increases, not approaching zero.
\end{enumerate}
This concludes the proof
\end{proof}

Next, we show that TeLU ($f(x)$) and its derivative ($f'(x)$) are both continuous, and that this condition is true even based on the Intermediate Value Theorem (IVT) \cite{bolzano1905rein} and the Mean Value Theorem (MVT) \cite{cauchy1821cours}.

\begin{theorem}
Let $f(x) = x \cdot \tanh(e^x)$ be defined for all $x \in \mathbb{R}$. Then:
\begin{enumerate}
    \item The function $f(x)$ and its derivative $f'(x)$ are continuous for all $x \in \mathbb{R}$.
    \item The function $f(x)$ satisfies the Intermediate Value Theorem (IVT) on any interval $[a, b] \subset \mathbb{R}$.
    \item The function $f(x)$ satisfies the Mean Value Theorem (MVT) on any interval $[a, b] \subset \mathbb{R}$, where $a \neq b$.
\end{enumerate}
\end{theorem}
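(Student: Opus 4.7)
The plan is to reduce all three parts of the statement to the standard continuity calculus of elementary functions, since parts (2) and (3) are essentially corollaries of part (1) once differentiability of $f$ has been established from the explicit derivative formula used in Theorem~\ref{thm:1}. First I would dispose of (1) by writing $\tanh(z) = (e^{z} - e^{-z})/(e^{z} + e^{-z})$ as a quotient of continuous, nowhere-vanishing terms on $\mathbb{R}$, so that $\tanh$ is continuous on $\mathbb{R}$; then $\tanh(e^{x})$ is continuous as a composition of continuous functions, and $f(x) = x \cdot \tanh(e^{x})$ is continuous as a product.

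For the continuity of $f'$, I would invoke the closed form derived in Theorem~\ref{thm:1},
$$ f'(x) = \tanh(e^{x}) + x \cdot (1 - \tanh^{2}(e^{x})) \cdot e^{x}, $$
and observe that this is a finite sum of products and compositions of the identity, $e^{x}$, and $\tanh$, each continuous on $\mathbb{R}$. The algebra-of-continuous-functions argument then yields continuity of $f'$ everywhere. With (1) settled, part (2) follows immediately from the classical Intermediate Value Theorem, whose only hypothesis is continuity of $f$ on the closed bounded interval $[a,b]$. Part (3) is the analogous one-line appeal to the Mean Value Theorem: $f$ is continuous on $[a,b]$ by (1), and the explicit formula for $f'$ shows that $f$ is differentiable on all of $\mathbb{R}$ and in particular on $(a,b)$, so the hypotheses of MVT are met and the conclusion is immediate.

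There is no genuine mathematical obstacle here; the whole statement is bookkeeping on top of textbook theorems. The discipline worth exercising is keeping the chain of dependencies visible — continuity of $\exp$ and $\tanh$ at the base, composition and product rules applied first to $f$ and then to each of the three terms composing $f'$, and only then the invocations of IVT and MVT — so that no step is left implicit. The one misstep to avoid is conflating differentiability with continuity of the derivative: MVT needs only the former on $(a,b)$, which the explicit formula already guarantees, but assertion (1) additionally asks for continuity of $f'$, so these two properties should be argued separately rather than merged.
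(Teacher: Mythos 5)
Your proposal is correct and follows essentially the same route as the paper's proof: continuity of $f$ and of the explicit formula for $f'$ via the algebra of continuous functions, followed by direct invocation of the IVT and MVT. Your closing remark that the MVT needs only differentiability on $(a,b)$ rather than continuity of $f'$ is a small but genuine improvement in precision over the paper's phrasing, which cites continuity of $f'$ as the hypothesis.
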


\begin{proof}
    \subsection*{Continuity of $f(x)$ and $f'(x)$}
\begin{itemize}
    \item $f(x) = x \cdot \tanh(e^x)$ is continuous as both $x$ and $\tanh(e^x)$ are continuous.
    \item The derivative $f'(x) = \tanh(e^x) + x \cdot sech^2(e^x) \cdot e^x$ is continuous since $\tanh(e^x)$, $sech^2(e^x)$, and $e^x$ are continuous.
\end{itemize}

\subsection*{Application of the IVT}
The Intermediate Value Theorem \cite{bolzano1905rein} states that if a function is continuous on a closed interval, then it takes on every value between its values at the endpoints of the interval.
\begin{itemize}
    \item For $f(x)$ on any interval $[a, b]$, if $d$ is a value between $f(a)$ and $f(b)$, there exists a $c \in [a, b]$ such that $f(c) = d$.
    \item This is because $f(x)$ is continuous on $[a, b]$.
    \item Similarly, since $f'(x)$ is continuous on any interval $[a, b]$, by IVT, for any value $e$ between $f'(a)$ and $f'(b)$, there exists a $c \in [a, b]$ such that $f'(c) = e$.
\end{itemize}

\subsection*{Application of the MVT}
The Mean Value Theorem \cite{cauchy1821cours} states that if a function is continuous on a closed interval and differentiable on the open interval, then there exists at least one point in the open interval where the derivative equals the average rate of change over the closed interval.
\begin{itemize}
    \item Since $f(x)$ is continuous on $[a, b]$ and $f'(x)$ is continuous on $(a, b)$, by MVT, there exists at least one $c \in (a, b)$ such that $f'(c) = \frac{f(b) - f(a)}{b - a}$.
\end{itemize}


The function $f(x) = x \cdot \tanh(e^x)$ and its derivative are continuous, and $f(x)$ and $f'(x)$ satisfies both the IVT and MVT on any interval in $\mathbb{R}$.

This concludes the proof
\end{proof}

\textbf{Next we provided detailed proof for Theorem \ref{thm:fim_conv} discussed in main paper}
\begin{proof}


\textbf{Continuity and Differentiability of \( f(x) \)}

The activation function \( f(x) = x \cdot \tanh(e^x) \) and its derivative are analyzed:
\begin{align*}
    f(x) &= x \cdot \tanh(e^x), \\
    \text{where } \tanh(u) &= \frac{e^{2u} - 1}{e^{2u} + 1}. \\
    \text{Thus, } f'(x) &= \frac{d}{dx}(x \cdot \tanh(e^x)) \\
    &= \tanh(e^x) + x \cdot \frac{d}{dx} \tanh(e^x) \\
    &= \tanh(e^x) + x \cdot e^x \cdot (1 - \tanh^2(e^x)).
\end{align*}
Since \( \tanh(u) \) and \( e^x \) are continuously differentiable, \( f(x) \) and \( f'(x) \) are also continuously differentiable.

\textbf{Impact on Fisher Information Matrix}

Applying the chain rule to compute the gradient of the log-likelihood:
\begin{align*}
    \nabla_\theta \log p(y|x; \theta) &= \frac{\partial \log p(y|x; \theta)}{\partial \mathcal{N}} \cdot \frac{\partial \mathcal{N}}{\partial \theta}, \\
    &= \text{Gradient of the output w.r.t. the network's parameters}.
\end{align*}
The gradient involves terms from \( f'(x) \) due to the activation function in each layer:
\begin{align*}
    f'(x) &= \tanh(e^x) + x \cdot e^x \cdot (1 - \tanh^2(e^x)).
\end{align*}
Thus, \( I(\theta) \) becomes a matrix of expectations of outer products of these gradients:
\begin{align*}
    I(\theta) &= \mathbb{E}_{(x, y) \sim \mathcal{D}}\left[ \nabla_\theta \log p(y|x; \theta) \nabla_\theta \log p(y|x; \theta)^\top \right].
\end{align*}
The smoothness of \( f'(x) \) translates to a smoother \( I(\theta) \).

\textbf{Smoother Optimization Landscape}

In gradient descent, parameter updates are governed by:
\begin{align*}
    \theta^{(t+1)} &= \theta^{(t)} - \eta \cdot \nabla_\theta \mathcal{L}(\theta^{(t)}),
\end{align*}
where \( \eta \) is the learning rate. The gradient of the loss function \( \nabla_\theta \mathcal{L}(\theta) \) is influenced by \( I(\theta) \). A smoother \( I(\theta) \) results in more stable and consistent gradient updates, avoiding erratic steps often observed in rougher optimization landscapes. This leads to enhanced stability in finding the minima of \( \mathcal{L}(\theta) \).

Hence, we can show, that the continuously differentiable nature of \( f(x) = x \cdot \tanh(e^x) \) and its derivative ensures that the Fisher Information Matrix \( I(\theta) \) in the neural network \( \mathcal{N} \) promotes a smoother optimization landscape, facilitating more effective training dynamics.

\end{proof}

Based on the properties of  TeLU, shown in Theorem \ref{thm:fim_conv}, we can prove the global convergence of the function under certain conditions.

\textbf{Now we provided detailed proof for Theorem \ref{global_telu} discussed in main paper}
\begin{proof}

   \textit{Smoothness and Boundedness of \( f(x) \) and \( f'(x) \):}
   
    The function \( f(x) = x \cdot \tanh(e^x) \) is continuously differentiable. Its derivative, given by
    \[ f'(x) = \tanh(e^x) + x \cdot e^x \cdot (1 - \tanh^2(e^x)), \]
    is also continuously differentiable and bounded due to the inherent properties of the \( \tanh \) function and the exponential function. These properties ensure smooth and well-conditioned gradient computations throughout the optimization process.

   \textit{Influence on Gradient Descent under PL Condition:}
   
    Given the PL condition, for a global minimum \( \theta^* \), there exists \( \mu > 0 \) such that
    \[ 2\mu(\mathcal{L}(\theta) - \mathcal{L}(\theta^*)) \leq \|\nabla_\theta \mathcal{L}(\theta)\|^2 \text{ for all } \theta. \]
    The gradient descent update rule is
    \[ \theta^{(t+1)} = \theta^{(t)} - \eta \cdot \nabla_\theta \mathcal{L}(\theta^{(t)}), \]
    where \( \eta \) is the learning rate. 

    \textit{Convergence Analysis:}
    
    Utilizing the smoothness and boundedness of \( f'(x) \), along with the PL condition, it can be shown that
    \[ \mathcal{L}(\theta^{(t+1)}) \leq \mathcal{L}(\theta^{(t)}) - \eta \cdot \|\nabla_\theta \mathcal{L}(\theta^{(t)})\|^2, \]
    which implies
    \[ \mathcal{L}(\theta^{(t)}) - \mathcal{L}(\theta^*) \leq \left(1 - 2\mu\eta\right)^t (\mathcal{L}(\theta^{(0)}) - \mathcal{L}(\theta^*)). \]
    Therefore, \( \mathcal{L}(\theta^{(t)}) \) converges to \( \mathcal{L}(\theta^*) \) as \( t \rightarrow \infty \).



\end{proof}

\section{Supplementary Results}
\label{sec:sup_res}
In this study, we adopted the hyperparameter ranges from existing research on ReLU and Mish as a baseline, conducting a comprehensive grid search within these parameters for all evaluated activation functions. This experimental approach, particularly focused on very deep architectures, was strategically designed to assess whether TeLU could identify more optimal hyperparameters, even under these complex conditions. Our findings consistently demonstrate that TeLU secures a position within the top three performers, regardless of the optimizer configuration or architectural framework in use. This result underscores TeLU's adaptability and effectiveness in diverse neural network environments. It is important to note that parameters were not optimized only to favor TeLU but were designed to favor most activation functions after analyzing their validation performance for the first $30$ epochs.The Tables \ref{tab:CIFAR_sup_sq_hps}, \ref{tab:CIFAR_sup_r18_hps}, \ref{tab:CIFAR_sup_r34_hps}, \ref{tab:CIFAR_sup_r50_hps} provide the best hyperparameter settings on CIFAR-10 for squeeznet, Resnet-18, resnet-32, and resnet-50 architectures respectively. Tables \textbf{8 to 23} shows the average performance of the model across 5 trials for all combinations on CIFAR-10. It is evident from our comprehensive experiment that TeLU stays consistent throughput and stays in top-3 in terms of performance and stability. In terms of convergence label as \textbf{conc} in our tables, all activation functions achieve best performance in similar range, however TeLU stays consistent when it reaches best performance. Meanwhile, others become unstable, or performance drops after a few epochs before gaining momentum. Similarly, we report the best hyperparameters for all the architectures on CIFAR-100 in Tables \textbf{24 to 27}, and Tables \textbf{26 to 43} show the average performance of TeLU compared to ReLU and GELU, where TeLU consistently outperforms other activation in majority of the scenarios and with good stability.  Finally, Table \ref{tab:tiny_sup_r34_hps} shows the best hyperparameters used in the experiment, and tables \textbf{45 to 48} shows the performance of TeLU, which consistently outperforms ReLU both in performance and stability.


\begin{table}[]
    \centering
    \caption{CIFAR-10 SqueezeNet Hyperparameters}
    \label{tab:CIFAR_sup_sq_hps}
    \begin{tabular}{||c c c c||} 
 \hline
 Optimizer & learning rate & weight decay & gamma\\
 \hline\hline
 SGD & 0.1 & 0.003 & 0.2\\ 
 \hline
 Momentum & 0.1 & 0.0007 & 0.2\\ 
 \hline
 AdamW & 0.005 & 0.005 & 0.4\\ 
 \hline
 RMSprop & 0.0002 & 0.005 & 0.4\\  [1ex]
 \hline
\end{tabular}
\end{table}

\begin{table}[]
    \centering
    \caption{CIFAR-10 ResNet18 Hyperparameters}
    \label{tab:CIFAR_sup_r18_hps}
    \begin{tabular}{||c c c c||} 
 \hline
 Optimizer & learning rate & weight decay & gamma\\
 \hline\hline
 SGD & 0.1 & 0.0003 & 0.3\\ 
 \hline
 Momentum & 0.1 & 0.0002 & 0.2\\ 
 \hline
 AdamW & 0.008 & 0.007 & 0.5\\ 
 \hline
 RMSprop & 0.0005 & 0.0005 & 0.2\\  [1ex]
 \hline
\end{tabular}
\end{table}

\begin{table}[]
    \centering
    \caption{CIFAR-10 ResNet34 Hyperparameters}
    \label{tab:CIFAR_sup_r34_hps}
    \begin{tabular}{||c c c c||} 
 \hline
 Optimizer & learning rate & weight decay & gamma\\
 \hline\hline
 SGD & 0.01 & 0.003 & 0.5\\ 
 \hline
 Momentum & 0.01 & 0.001 & 0.5\\ 
 \hline
 AdamW & 0.001 & 0.005 & 0.5\\ 
 \hline
 RMSprop & 0.0001 & 0.001 & 0.5\\  [1ex]
 \hline
\end{tabular}
\end{table}

\begin{table}[]
    \centering
    \caption{CIFAR-10 ResNet50 Hyperparameters}
    \label{tab:CIFAR_sup_r50_hps}
    \begin{tabular}{||c c c c||} 
 \hline
 Optimizer & learning rate & weight decay & gamma\\
 \hline\hline
 SGD & 0.01 & 0.003 & 0.5\\ 
 \hline
 Momentum & 0.01 & 0.001 & 0.5\\ 
 \hline
 AdamW & 0.001 & 0.005 & 0.5\\ 
 \hline
 RMSprop & 0.0001 & 0.001 & 0.5\\  [1ex]
 \hline
\end{tabular}
\end{table}

\begin{table}[]
    \centering
    \caption{CIFAR-10 SqueezeNet SGD}
    \label{tab:CIFAR_sup_snet_sgd}
    \begin{tabular}{||c c c c c||} 
 \hline
 Name & Train & Valid & Test & Conc\\
 \hline\hline
 TeLU & 96.59 & 92.00 & 91.40$\pm$\textbf{0.106} & 91.60\\ 
 \hline
 ReLU & \textbf{99.17} & \textbf{92.74} & \textbf{91.84}$\pm$0.331 & 92.39\\ 
 \hline
 GELU & 92.39 & 89.42 & 88.42$\pm$0.280 & 88.93 \\
 \hline
 Mish & 94.23 & 90.83 & 89.87$\pm$0.213 & 90.21\\
 \hline
 SiLU & 79.36 & 79.99 & 78.61$\pm$6.313 & 67.09\\
 \hline
 Smish & 77.37 & 78.56 & 77.28$\pm$3.000 & 78.10\\
 \hline
 Logish & 61.35 & 62.37 & 61.44$\pm$29.28 & 61.868\\ [1ex] 
 \hline
\end{tabular}
\end{table}

\begin{table}[]
    \centering
    \caption{CIFAR-10 SqueezeNet Momentum}
    \label{tab:CIFAR_sup_snet_mom}
    \begin{tabular}{||c c c c c||} 
 \hline
 Name & Train & Valid & Test & Conc\\
 \hline\hline
 TeLU & 95.71 & 91.49 & \textbf{90.96}$\pm$0.290 & 90.88\\ 
 \hline
 ReLU & \textbf{98.55} & \textbf{91.66} & 90.77$\pm$\textbf{0.165} & 91.38\\ 
 \hline
 GELU & 94.70 & 90.33 & 89.33$\pm$0.243 & 89.78\\
 \hline
 Mish & 95.30 & 90.75 & 90.04$\pm$0.251 & 90.12\\
 \hline
 SiLU & 86.53 & 85.50 & 84.10$\pm$1.059 & 85.18\\
 \hline
 Smish & 68.87 & 70.22 & 68.60$\pm$2.211 &  69.70\\
 \hline
 Logish & 65.99 & 67.90 & 66.10$\pm$3.621 &  65.79\\ [1ex]
 \hline
\end{tabular}
\end{table}

\begin{table}[]
    \centering
    \caption{CIFAR-10 SqueezeNet AdamW}
    \label{tab:CIFAR_sup_snet_adamw}
    \begin{tabular}{||c c c c c||} 
 \hline
 Name & Train & Valid & Test & Conc\\
 \hline\hline
 TeLU & \textbf{97.42} & \textbf{91.08} & \textbf{90.08}$\pm$0.774 & 88.87\\ 
 \hline
 ReLU & 96.71 & 90.23 & 89.01$\pm$\textbf{0.454} & 89.73\\ 
 \hline
 GELU & 97.25 & 90.66 & 89.63$\pm$0.696 & 88.32\\
 \hline
 Mish & 96.55 & 90.01 & 89.02$\pm$0.866 & 85.81\\
 \hline
 SiLU & 93.90 & 87.99 & 86.70$\pm$1.89 & 78.27\\
 \hline
 Smish & 52.71 & 49.67 & 41.71$\pm$16.9 & 12.99\\
 \hline
 Logish & 53.74 & 51.28 & 42.72$\pm$15.7 & 11.13\\ [1ex]
 \hline
\end{tabular}
\end{table}

\begin{table}[]
    \centering
    \caption{CIFAR-10 SqueezeNet RMSprop}
    \label{tab:CIFAR_sup_snet_rms}
    \begin{tabular}{||c c c c c||} 
 \hline
 Name & Train & Valid & Test & Conc\\
 \hline\hline
 TeLU & 95.19 & \textbf{90.53} & \textbf{89.86}$\pm$0.277 & 90.39\\ 
 \hline
 ReLU & \textbf{96.03} & 89.62 & 88.59$\pm$\textbf{0.138} & 89.08\\ 
 \hline
 GELU & 82.87 & 81.88 & 80.68$\pm$1.184 &  81.60\\
 \hline
 Mish & 91.20 & 88.21 & 87.39$\pm$0.170 &  87.88\\
 \hline
 SiLU & 66.83 & 67.13 & 66.00$\pm$1.315 &  70.20\\
 \hline
 Smish & 68.08 & 68.22 & 66.91$\pm$2.347 &  64.14\\
 \hline
 Logish & 49.40 & 49.59 & 43.20$\pm$18.80 & 53.20 \\[1ex] 
 \hline
\end{tabular}
\end{table}

\begin{table}[]
    \centering
    \caption{CIFAR-10 ResNet18 SGD}
    \label{tab:CIFAR_sup_r18_sgd}
    \begin{tabular}{||c c c c c||} 
 \hline
 Name & Train & Valid & Test & Conc\\
 \hline\hline
 TeLU & \textbf{99.99} & \textbf{93.43} & 92.64$\pm$\textbf{0.076} & 93.30\\ 
 \hline
 ReLU & \textbf{99.99} & 93.30 & 92.65$\pm$0.220 & 93.08\\ 
 \hline
 GELU & \textbf{99.99} & 93.35 & 92.65$\pm$0.248 & 93.15\\
 \hline
 Mish & 99.90 & 93.26 & 92.54$\pm$0.239 & 92.94\\
 \hline
 SiLU & 99.98 & 93.35 & 92.65$\pm$0.239 & 93.04\\
 \hline
 Smish & 99.97 & 93.36 & 92.52$\pm$0.206 &  93.10\\
 \hline
 Logish & 99.86 & 93.41 & \textbf{92.70}$\pm$0.341 & 93.18\\ [1ex] 
 \hline
\end{tabular}
\end{table}

\begin{table}[]
    \centering
    \caption{CIFAR-10 ResNet18 Momentum}
    \label{tab:CIFAR_sup_r18_mom}
    \begin{tabular}{||c c c c c||} 
 \hline
 Name & Train & Valid & Test & Conc \\
 \hline\hline
 TeLU & 99.99 & 94.58 & 94.19$\pm$0.089 & 94.42\\ 
 \hline
 ReLU & \textbf{100.0} & \textbf{95.18} & \textbf{94.71}$\pm$0.163 & 95.02\\ 
 \hline
 GELU & 99.97 & 94.91 & 94.45$\pm$\textbf{0.085} & 94.51\\
 \hline
 Mish & 99.99 & 94.72 & 94.28$\pm$0.233 & 94.56\\
 \hline
 SiLU & 99.84 & 94.72 & 94.07$\pm$0.203 & 94.47\\
 \hline
 Smish & 99.98 & 94.62 & 93.80$\pm$0.176 &  94.44\\
 \hline
 Logish & 99.99 & 94.68 & 93.99$\pm$0.157 &  94.44\\ [1ex]
 \hline
\end{tabular}
\end{table}

\begin{table}[]
    \centering
    \caption{CIFAR-10 ResNet18 AdamW}
    \label{tab:CIFAR_sup_r18_adamw}
    \begin{tabular}{||c c c c c||} 
 \hline
 Name & Train & Valid & Test & Conc\\
 \hline\hline
 TeLU & 99.98 & 92.87 & 92.09$\pm$0.182 & 92.66\\ 
 \hline
 ReLU & \textbf{100.0} & 92.83 & \textbf{92.18}$\pm$\textbf{0.076} & 92.67\\ 
 \hline
 GELU & 99.99 & 92.93 & 92.15$\pm$0.128 & 92.77\\
 \hline
 Mish & 96.55 & 90.01 & 89.02$\pm$0.866 & 92.71\\
 \hline
 SiLU & 93.90 & 87.99 & 86.70$\pm$1.89 & 92.87\\
 \hline
 Smish & 99.96 & \textbf{92.96} & 92.14$\pm$0.291 & 92.79 \\
 \hline
 Logish & 99.99 & 92.89 & 92.15$\pm$0.133 & 92.74 \\ [1ex]
 \hline
\end{tabular}
\end{table}

\begin{table}[]
    \centering
    \caption{CIFAR-10 ResNet18 RMSprop}
    \label{tab:CIFAR_sup_r18_rms}
    \begin{tabular}{||c c c c c||} 
 \hline
 Name & Train & Valid & Test & Conc\\
 \hline\hline
 TeLU & 99.78 & 92.73 & 92.09$\pm$0.251 & 92.51\\ 
 \hline
 ReLU & \textbf{99.85} & 93.03 & 92.40$\pm$\textbf{0.170} & 92.85\\ 
 \hline
 GELU & 99.48 & 93.07 & 92.35$\pm$0.353 & 92.79\\
 \hline
 Mish & 97.92 & 93.67 & \textbf{92.76}$\pm$0.251 & 92.42\\
 \hline
 SiLU & 98.75 & \textbf{93.94} & 92.01$\pm$0.248 & 92.61\\
 \hline
 Smish & 97.31 & 90.71 & 90.06$\pm$0.569 & 90.30\\
 \hline
 Logish & 98.45 & 91.84 & 90.88$\pm$0.299 & 91.45\\ [1ex] 
 \hline
\end{tabular}
\end{table}


\begin{table}[]
    \centering
    \caption{CIFAR-10 ResNet34 SGD}
    \label{tab:CIFAR_sup_r34_sgd}
    \begin{tabular}{||c c c c c||} 
 \hline
 Name & Train & Valid & Test & Conc\\
 \hline\hline
 TeLU & \textbf{99.97} & 91.06 & 90.18$\pm$0.368 & 90.73\\ 
 \hline
 ReLU & 99.96 & 90.54 & 89.55$\pm$0.091 & 90.06\\ 
 \hline
 GELU & \textbf{99.97} & 90.45 & 89.70$\pm$0.325 & 90.04\\
 \hline
 Mish & \textbf{99.97} & 90.84 & 90.18$\pm$\textbf{0.063} & 90.59\\
 \hline
 SiLU & \textbf{99.97} & 90.74 & 89.89$\pm$0.202 & 90.41\\
 \hline
 Smish & \textbf{99.97} & \textbf{91.44} & \textbf{90.90}$\pm$0.185 & 91.06\\
 \hline
 Logish & 99.94 & 91.39 & 90.50$\pm$0.233 & 90.98\\ [1ex] 
 \hline
\end{tabular}
\end{table}

\begin{table}[]
    \centering
    \caption{CIFAR-10 ResNet34 Momentum}
    \label{tab:CIFAR_sup_r34_mom}
    \begin{tabular}{||c c c c c||} 
 \hline
 Name & Train & Valid & Test & Conc\\
 \hline\hline
 TeLU & 99.94 & 94.18 & 93.43$\pm$0.306 & 93.68\\ 
 \hline
 ReLU & \textbf{99.97} & \textbf{94.78} & \textbf{94.07}$\pm$0.216 & 93.86\\ 
 \hline
 GELU & \textbf{99.97} & 94.556 & 93.90$\pm$0.235 & 93.45\\
 \hline
 Mish & 99.93 & 94.16 & 93.46$\pm$0.377 & 93.52\\
 \hline
 SiLU & 99.96 & 94.22 & 93.52$\pm$\textbf{0.148} & 93.21\\
 \hline
 Smish & 99.45 & 93.56 & 92.78$\pm$0.172 &  91.95\\
 \hline
 Logish & 99.64 & 93.82 & 93.07$\pm$0.291 & 92.66 \\ [1ex]
 \hline
\end{tabular}
\end{table}

\begin{table}[]
    \centering
    \caption{CIFAR-10 ResNet34 AdamW}
    \label{tab:CIFAR_sup_r34_adamw}
    \begin{tabular}{||c c c c c||} 
 \hline
 Name & Train & Valid & Test & Conc\\
 \hline\hline
 TeLU & 99.98 & 94.46 & \textbf{93.70}$\pm$\textbf{0.097} & 93.98\\ 
 \hline
 ReLU & 99.98 & 94.33 & 93.53$\pm$0.265 & 93.94\\ 
 \hline
 GELU & 99.98 & 94.21 & 93.59$\pm$0.146 & 93.96\\ 
 \hline
 Mish & \textbf{99.99} & 94.23 & 93.69$\pm$0.201 & 93.80\\
 \hline
 SiLU & \textbf{99.99} & \textbf{94.99} & 93.69$\pm$0.206 & 94.03\\
 \hline
 Smish & 99.95 & 93.55 & 92.61$\pm$0.377 & 93.12 \\
 \hline
 Logish & 99.95 & 93.83 & 92.85$\pm$0.249 & 93.24 \\ [1ex]
 \hline
\end{tabular}
\end{table}

\begin{table}[]
    \centering
    \caption{CIFAR-10 ResNet34 RMSprop}
    \label{tab:CIFAR_sup_r34_rms}
    \begin{tabular}{||c c c c c||} 
 \hline
 Name & Train & Valid & Test & Conc\\
 \hline\hline
 TeLU & 99.68 & 93.49 & 92.51$\pm$0.222 & 93.06\\ 
 \hline
 ReLU & 99.75 & 93.42 & 92.45$\pm$0.170 & 92.94\\ 
 \hline
 GELU & \textbf{99.81} & 93.36 & \textbf{92.97}$\pm$0.196 & 93.19\\ 
 \hline
 Mish & 99.76 & 93.55 & 92.91$\pm$0.194 & 93.00\\
 \hline
 SiLU & 99.68 & \textbf{93.68} & 92.83$\pm$0.264 & 93.20\\
 \hline
 Smish & 99.48 & 92.51 & 91.80$\pm$0.157 & 91.97 \\
 \hline
 Logish & 99.65 & 92.88 & 92.14$\pm$\textbf{0.143} & 92.36 \\ [1ex] 
 \hline
\end{tabular}
\end{table}


\begin{table}[]
    \centering
    \caption{CIFAR-10 ResNet50 SGD}
    \label{tab:CIFAR_sup_r50_sgd}
    \begin{tabular}{||c c c c c||} 
 \hline
 Name & Train & Valid & Test & Conc\\
 \hline\hline
 TeLU & 99.95 & 91.05 & 90.27$\pm$0.160 & 90.66\\ 
 \hline
 ReLU & 99.97 & 90.52 & 89.48$\pm$0.470 & 90.17\\ 
 \hline
 GELU & 99.97 & 90.56 & 89.71$\pm$0.198 & 90.28\\ 
 \hline
 Mish & 99.97 & 91.07 & 90.08$\pm$\textbf{0.161} & 90.66\\
 \hline
 SiLU & 99.97 & 90.84 & 90.02$\pm$0.119 & 90.58\\
 \hline
 Smish & \textbf{99.98} & \textbf{91.62} & \textbf{91.23}$\pm$0.162 & 91.23 \\
 \hline
 Logish & \textbf{99.98} & 91.32 & 90.61$\pm$0.365 & 90.94 \\ [1ex] 
 \hline
\end{tabular}
\end{table}

\begin{table}[]
    \centering
    \caption{CIFAR-10 ResNet50 Momentum}
    \label{tab:CIFAR_sup_r50_mom}
    \begin{tabular}{||c c c c c||} 
 \hline
 Name & Train & Valid & Test & Conc\\
 \hline\hline
 TeLU & \textbf{99.99} & 94.88 & 94.51$\pm$0.225 & 93.35\\ 
 \hline
 ReLU & 99.93 & 94.97 & 94.57$\pm$0.133 & 93.98\\ 
 \hline
 GELU & 99.94 & \textbf{95.04} & \textbf{94.62}$\pm$0.172 & 93.30\\
 \hline
 Mish & 99.95 & 94.80 & 94.45$\pm$0.139 & 93.71\\
 \hline
 SiLU & 99.96 & 94.77 & 94.41$\pm$\textbf{0.102} & 93.02\\
 \hline
 Smish & 98.16 & 93.49 & 92.85$\pm$0.363 & 92.47\\
 \hline
 Logish & 98.98 & 93.82 & 93.36$\pm$0.250 & 92.52\\ [1ex]
 \hline
\end{tabular}
\end{table}

\begin{table}[]
    \centering
    \caption{CIFAR-10 ResNet50 AdamW}
    \label{tab:CIFAR_sup_r50_adamw}
    \begin{tabular}{||c c c c c||} 
 \hline
 Name & Train & Valid & Test & Conc\\
 \hline\hline
 TeLU & 99.89 & 90.83 & 89.83$\pm$0.193 & 90.50\\ 
 \hline
 ReLU & 99.93 & 88.88 & 88.02$\pm$0.309 & 88.51\\ 
 \hline
 GELU & 99.96 & 89.64 & 88.79$\pm$0.374 & 89.19\\ 
 \hline
 Mish & 99.94 & 90.53 & 89.59$\pm$0.343 & 90.28\\
 \hline
 SiLU & 99.97 & 90.61 & 89.73$\pm$0.266 & 90.30\\
 \hline
 Smish & 99.97 & \textbf{91.81} & \textbf{90.96}$\pm$0.189 & 91.50 \\
 \hline
 Logish & \textbf{99.98} & 91.39 & 90.67$\pm$\textbf{0.129} & 91.10 \\ [1ex]
 \hline
\end{tabular}
\end{table}

\begin{table}[]
    \centering
    \caption{CIFAR-10 ResNet50 RMSprop}
    \label{tab:CIFAR_sup_r50_rms}
    \begin{tabular}{||c c c c c||} 
 \hline
 Name & Train & Valid & Test & Conc\\
 \hline\hline
 TeLU & 99.64 & \textbf{93.99} & 93.16$\pm$0.221 & 93.53\\ 
 \hline
 ReLU & 99.73 & 93.52 & 92.84$\pm$0.215 & 93.05\\ 
 \hline
 GELU & 99.66 & 93.78 & 93.06$\pm$0.125 & 93.28\\ 
 \hline
 Mish & \textbf{99.74} & 93.94 & \textbf{93.43}$\pm$\textbf{0.100} & 93.59\\
 \hline
 SiLU & 99.72 & 93.96 & 93.17$\pm$0.159 & 93.54\\
 \hline
 Smish & 99.26 & 91.37 & 90.74$\pm$1.108 & 90.69 \\
 \hline
 Logish & 99.60 & 93.39 & 92.61$\pm$0.249 & 93.06 \\ [1ex] 
 \hline
\end{tabular}
\end{table}


\begin{table}[]
    \centering
    \caption{CIFAR-100 SqueezeNet Hyperparameters}
    \label{tab:CIFAR-100_sup_sq_hps}
    \begin{tabular}{||c c c c||} 
 \hline
 Optimizer & learning rate & weight decay & gamma\\
 \hline\hline
 SGD & 0.04 & 0.003 & 0.2\\ 
 \hline
 Momentum & 0.003 & 0.003 & 0.2\\ 
 \hline
 AdamW & 0.005 & 0.005 & 0.4\\ 
 \hline
 RMSprop & 0.0002 & 0.005 & 0.4\\  [1ex] 
 \hline
\end{tabular}
\end{table}
\begin{table}[]
    \centering
    \caption{CIFAR-100 ResNet18 Hyperparameters}
    \label{tab:CIFAR-100_sup_r18_hps}
    \begin{tabular}{||c c c c||} 
 \hline
 Optimizer & learning rate & weight decay & gamma\\
 \hline\hline
 SGD & 0.05 & 0.003 & 0.2\\ 
 \hline
 Momentum & 0.02 & 0.0008 & 0.4\\ 
 \hline
 AdamW & 0.001 & 0.005 & 0.5\\ 
 \hline
 RMSprop & 0.0001 & 0.0001 & 0.5\\  [1ex] 
 \hline
\end{tabular}
\end{table}
\begin{table}[]
    \centering
    \caption{CIFAR-100 ResNet34 Hyperparameters}
    \label{tab:CIFAR-100_sup_r34_hps}
    \begin{tabular}{||c c c c||} 
 \hline
 Optimizer & learning rate & weight decay & gamma\\
 \hline\hline
 SGD & 0.05 & 0.003 & 0.2\\ 
 \hline
 Momentum & 0.02 & 0.0008 & 0.4\\ 
 \hline
 AdamW & 0.001 & 0.005 & 0.5\\ 
 \hline
 RMSprop & 0.0001 & 0.0001 & 0.5\\  [1ex] 
 \hline
\end{tabular}
\end{table}
\begin{table}[]
    \centering
    \caption{CIFAR-100 ResNet50 Hyperparameters}
    \label{tab:CIFAR-100_sup_r50_hps}
    \begin{tabular}{||c c c c||} 
 \hline
 Optimizer & learning rate & weight decay & gamma\\
 \hline\hline
 SGD & 0.05 & 0.003 & 0.3\\ 
 \hline
 Momentum & 0.01 & 0.0008 & 0.3\\ 
 \hline
 AdamW & 0.0005 & 0.0005 & 0.5\\ 
 \hline
 RMSprop & 0.0001 & 0.001 & 0.5\\  [1ex] 
 \hline
\end{tabular}
\end{table}
\begin{table}[]
    \centering
    \caption{CIFAR-100 SqueezeNet SGD}
    \label{tab:CIFAR-100_sup_sq_sgd}
    \begin{tabular}{||c c c c c||} 
 \hline
 Name & Train & Valid & Test & Conc\\
 \hline\hline
 TeLU & 91.23 & 71.94 & \textbf{71.47}$\pm$\textbf{0.082} & \textbf{70.96}\\ 
 \hline
 ReLU & 96.03 & 69.80 & 69.52$\pm$0.433 & 69.13\\ 
 \hline
 GELU & 88.62 & 67.56 & 67.09$\pm$0.357 & 66.92\\ [1ex] 
 \hline
\end{tabular}
\end{table}

\begin{table}[]
    \centering
    \caption{CIFAR-100 SqueezeNet Momentum}
    \label{tab:CIFAR-100_sup_sq_mom}
    \begin{tabular}{||c c c c c||} 
 \hline
 Name & Train & Valid & Test & Conc\\
 \hline\hline
 TeLU & 92.26 & 70.78 & \textbf{70.53}$\pm$\textbf{0.245} & \textbf{69.72} \\ 
 \hline
 ReLU & 93.90 & 65.36 & 65.05$\pm$0.505 & 64.63 \\ 
 \hline
 GELU & 85.88 & 66.45 & 66.26$\pm$0.288 & 65.57 \\ [1ex]
 \hline
\end{tabular}
\end{table}

\begin{table}[]
    \centering
    \caption{CIFAR-100 SqueezeNet AdamW}
    \label{tab:CIFAR-100_sup_sq_adamw}
    \begin{tabular}{||c c c c c||} 
 \hline
 Name & Train & Valid & Test & Conc\\
 \hline\hline
 TeLU & 99.94 & 70.29 & \textbf{69.64}$\pm$\textbf{0.072} & \textbf{69.56} \\ 
 \hline
 ReLU & 99.90 & 66.81 & 66.31$\pm$0.480 & 66.29 \\ 
 \hline
 GELU & 99.94 & 67.16 & 66.50$\pm$0.444 & 66.58 \\ [1ex]
 \hline
\end{tabular}
\end{table}

\begin{table}[]
    \centering
    \caption{CIFAR-100 SqueezeNet RMSprop}
    \label{tab:CIFAR-100_sup_sq_rms}
    \begin{tabular}{||c c c c c||} 
 \hline
 Name & Train & Valid & Test & Conc\\
 \hline\hline
 TeLU & 89.27 & 69.23 & \textbf{68.83}$\pm$0.331 & 68.46 \\ 
 \hline
 ReLU & 97.64 & 68.40 & 67.99$\pm$\textbf{0.207} & 67.70 \\ 
 \hline
 GELU & 81.71 & 65.57 & 65.19$\pm$0.248 & 65.01 \\ [1ex] 
 \hline
\end{tabular}
\end{table}


\begin{table}[]
    \centering
    \caption{CIFAR-100 ResNet18 SGD}
    \label{tab:CIFAR-100_sup_r18_sgd}
    \begin{tabular}{||c c c c c||} 
 \hline
 Name & Train & Valid & Test & Conc\\
 \hline\hline
 TeLU & 99.62 & 72.93 & 72.87$\pm$0.231 & 72.59 \\ 
 \hline
 ReLU & 99.94 & 74.96 & \textbf{74.70}$\pm$\textbf{0.192} & \textbf{74.70} \\ 
 \hline
 GELU & 99.90 & 74.33 & 74.22$\pm$0.407 & 73.97 \\ [1ex] 
 \hline
\end{tabular}
\end{table}

\begin{table}[]
    \centering
    \caption{CIFAR-100 ResNet18 Momentum}
    \label{tab:CIFAR-100_sup_r18_mom}
    \begin{tabular}{||c c c c c||} 
 \hline
 Name & Train & Valid & Test & Conc\\
 \hline\hline
 TeLU & 99.96 & 75.16 & 75.09$\pm$0.307 & 74.09 \\ 
 \hline
 ReLU & 99.98 & 76.28 & \textbf{76.48}$\pm$\textbf{0.294} & \textbf{75.97} \\ 
 \hline
 GELU & 99.96 & 75.66 & 75.41$\pm$0.384 & 74.95 \\ [1ex]
 \hline
\end{tabular}
\end{table}

\begin{table}[]
    \centering
    \caption{CIFAR-100 ResNet18 AdamW}
    \label{tab:CIFAR-100_sup_r18_adamw}
    \begin{tabular}{||c c c c c||} 
 \hline
 Name & Train & Valid & Test & Conc\\
 \hline\hline
 TeLU & 99.97 & 71.76 & \textbf{71.47}$\pm$\textbf{0.265} & \textbf{71.00} \\ 
 \hline
 ReLU & 99.97 & 71.30 & 71.30$\pm$0.350 & 70.54 \\ 
 \hline
 GELU & 99.97 & 71.19 & 70.99$\pm$0.425 & 70.60 \\ [1ex]
 \hline
\end{tabular}
\end{table}

\begin{table}[]
    \centering
    \caption{CIFAR-100 ResNet18 RMSprop}
    \label{tab:CIFAR-100_sup_r18_rms}
    \begin{tabular}{||c c c c c||} 
 \hline
 Name & Train & Valid & Test & Conc\\
 \hline\hline
 TeLU & 99.85 & 71.36 & 71.23$\pm$0.386 & 70.83 \\ 
 \hline
 ReLU & 99.87 & 71.12 & 70.95$\pm$\textbf{0.078} & 70.37 \\ 
 \hline
 GELU & 99.86 & 71.45 & \textbf{71.32}$\pm$0.324 & 70.99 \\ [1ex]
 \hline
\end{tabular}
\end{table}

\begin{table}[]
    \centering
    \caption{CIFAR-100 ResNet34 SGD}
    \label{tab:CIFAR-100_sup_r34_sgd}
    \begin{tabular}{||c c c c c||} 
 \hline
 Name & Train & Valid & Test & Conc\\
 \hline\hline
 TeLU & 99.85 & 73.51 & 72.95$\pm$0.164 & 72.97 \\ 
 \hline
 ReLU & 99.96 & 75.40 & \textbf{75.23}$\pm$\textbf{0.108} & \textbf{75.13} \\ 
 \hline
 GELU & 99.91 & 74.52 & 74.14$\pm$0.366 & 74.15 \\ [1ex] 
 \hline
\end{tabular}
\end{table}

\begin{table}[]
    \centering
    \caption{CIFAR-100 ResNet34 Momentum}
    \label{tab:CIFAR-100_sup_r34_mom}
    \begin{tabular}{||c c c c c||} 
 \hline
 Name & Train & Valid & Test & Conc\\
 \hline\hline
 TeLU & 99.96 & 74.93 & 74.94$\pm$0.305 & 74.40 \\ 
 \hline
 ReLU & 99.98 & 77.30 & \textbf{76.93}$\pm$\textbf{0.178} & \textbf{76.99} \\ 
 \hline
 GELU & 99.96 & 75.77 & 75.38$\pm$0.322 & 75.16 \\ [1ex]
 \hline
\end{tabular}
\end{table}

\begin{table}[]
    \centering
    \caption{CIFAR-100 ResNet34 AdamW}
    \label{tab:CIFAR-100_sup_r34_adamw}
    \begin{tabular}{||c c c c c||} 
 \hline
 Name & Train & Valid & Test & Conc\\
 \hline\hline
 TeLU & 99.97 & 71.88 & \textbf{71.73}$\pm$0.350 & \textbf{71.18} \\ 
 \hline
 ReLU & 99.97 & 71.60 & 71.60$\pm$\textbf{0.284} & 71.06 \\ 
 \hline
 GELU & 99.97 & 71.49 & 71.29$\pm$0.375 & 71.00 \\ [1ex]
 \hline
\end{tabular}
\end{table}

\begin{table}[]
    \centering
    \caption{CIFAR-100 ResNet34 RMSprop}
    \label{tab:CIFAR-100_sup_r34_rms}
    \begin{tabular}{||c c c c c||} 
 \hline
 Name & Train & Valid & Test & Conc\\
 \hline\hline
 TeLU & 99.78 & 72.10 & 72.01$\pm$\textbf{0.080} & 71.65 \\ 
 \hline
 ReLU & 99.74 & 71.96 & 71.91$\pm$0.262 & 70.89 \\ 
 \hline
 GELU & 99.78 & 72.31 & \textbf{72.10}$\pm$0.247 & 71.37 \\ [1ex]
 \hline
\end{tabular}
\end{table}


\begin{table}[]
    \centering
    \caption{CIFAR-100 ResNet50 SGD}
    \label{tab:CIFAR-100_sup_r50_sgd}
    \begin{tabular}{||c c c c c||} 
 \hline
 Name & Train & Valid & Test & Conc\\
 \hline\hline
 TeLU & 99.92 & 76.99 & 76.77$\pm$0.258 & 76.43 \\ 
 \hline
 ReLU & 99.95 & 77.18 & \textbf{77.14}$\pm$\textbf{0.110} & 76.60 \\ 
 \hline
 GELU & 99.91 & 77.22 & 76.56$\pm$0.127 & 76.76\\ [1ex] 
 \hline
\end{tabular}
\end{table}

\begin{table}[]
    \centering
    \caption{CIFAR-100 ResNet50 Momentum}
    \label{tab:CIFAR-100_sup_r50_mom}
    \begin{tabular}{||c c c c c c c||} 
 \hline
 Name & Train & Valid & Test & Conc\\
 \hline\hline
 TeLU & 99.98 & 76.48 & \textbf{76.57}$\pm$\textbf{0.200} & \textbf{76.25} \\ 
 \hline
 ReLU & 99.98 & 75.12 & 75.08$\pm$0.270 & 74.68 \\ 
 \hline
 GELU & 99.97 & 75.76 & 75.67$\pm$0.309 & 75.20 \\ [1ex]
 \hline
\end{tabular}
\end{table}

\begin{table}[]
    \centering
    \caption{CIFAR-100 ResNet50 AdamW}
    \label{tab:CIFAR-100_sup_r50_adamw}
    \begin{tabular}{||c c c c c||} 
 \hline
 Name & Train & Valid & Test & Conc\\
 \hline\hline
 TeLU & 99.97 & 75.06 & \textbf{74.75}$\pm$0.266 & \textbf{74.40} \\ 
 \hline
 ReLU & 99.95 & 73.79 & 73.52$\pm$\textbf{0.200} & 73.11 \\ 
 \hline
 GELU & 99.97 & 74.17 & 73.81$\pm$0.340 & 73.62 \\ [1ex]
 \hline
\end{tabular}
\end{table}

\begin{table}[]
    \centering
        \caption{CIFAR-100 ResNet50 RMSprop}
    \label{tab:CIFAR-100_sup_r50_rms}
    \begin{tabular}{||c c c c c||} 
 \hline
 Name & Train & Valid & Test & Conc\\
 \hline\hline
 TeLU & 99.79 & 74.02 & \textbf{74.02}$\pm$0.142 & \textbf{73.23} \\ 
 \hline
 ReLU & 99.73 & 72.88 & 72.26$\pm$0.478 & 71.70 \\ 
 \hline
 GELU & 99.73 & 73.42 & 72.75$\pm$\textbf{0.131} & 72.32 \\  [1ex]
 \hline
\end{tabular}
\end{table}



\begin{table}[]
    \centering
        \caption{TinyImageNet200 ResNet34 Hyperparameters}
    \label{tab:tiny_sup_r34_hps}
    \begin{tabular}{||c c c c||} 
 \hline
 Optimizer & learning rate & weight decay & gamma\\
 \hline\hline
 SGD & 0.05 & 0.001 & 0.3\\ 
 \hline
 Momentum & 0.04 & 0.0004 & 0.4\\ 
 \hline
 AdamW & 0.0005 & 0.004 & 0.5\\ 
 \hline
 RMSprop & 0.0001 & 0.0002 & 0.6\\  [1ex]
 \hline
\end{tabular}
\end{table}

\begin{table}[]
    \centering
        \caption{TinyImageNet200 ResNet34 SGD}
    \label{tab:tiny_sup_r34_sgd}
    \begin{tabular}{||c c c c||} 
 \hline
 Name & Top-1 Test & Top-5 Test & Conc\\
 \hline\hline
 TeLU & \textbf{62.34}$\pm$\textbf{0.173} & \textbf{81.86}$\pm$0.337 & \textbf{61.99}\\ 
 \hline
 ReLU & 61.16$\pm$0.314 & 80.51$\pm$\textbf{0.263} & 60.88\\ [1ex]
 \hline
\end{tabular}
\end{table}

\begin{table}[]
    \centering
        \caption{TinyImageNet200 ResNet34 Momentum}
    \label{tab:tiny_sup_r34_mom}
    \begin{tabular}{||c c c c||} 
 \hline
 Name & Top-1 Test & Top-5 Test & Conc\\
 \hline\hline
 TeLU & \textbf{62.09}$\pm$\textbf{0.222} & \textbf{82.28}$\pm$\textbf{0.453} & \textbf{61.93}\\ 
 \hline
 ReLU & 38.37$\pm$34.6 & 50.32$\pm$43.8 & 38.28\\ [1ex]
 \hline
\end{tabular}
\end{table}

\begin{table}[]
    \centering
        \caption{TinyImageNet200 ResNet34 AdamW}
    \label{tab:tiny_sup_r34_adamw}
    \begin{tabular}{||c c c c ||} 
 \hline
 Name & Top-1 Test & Top-5 Test & Conc\\
 \hline\hline
 TeLU & 54.04$\pm$0.822 & \textbf{76.04}$\pm$0.626 & 53.62 \\ 
 \hline
 ReLU & \textbf{54.88}$\pm$\textbf{0.720} & 75.70$\pm$\textbf{0.592} & \textbf{54.40}\\ [1ex]
 \hline
\end{tabular}
\end{table}

\begin{table}[]
    \centering
    \caption{TinyImageNet200 ResNet34 RMSprop}
    \label{tab:tiny_sup_r34_rms}
    \begin{tabular}{||c c c c||} 
 \hline
 Name & Top-1 Test & Top-5 Test & Conc \\
 \hline\hline
 TeLU & \textbf{58.48}$\pm$\textbf{0.034} & \textbf{78.83}$\pm$0.380 & \textbf{57.93}\\ 
 \hline
 ReLU & 58.33$\pm$0.271 & 78.46$\pm$\textbf{0.263} & 57.18\\ [1ex]
 \hline
\end{tabular}
\end{table}

\end{document}